\documentclass{article} 
\usepackage{iclr2026_conference,times}
\usepackage{comment}

\usepackage{amsmath,amsfonts,bm}

\def\eqref#1{equation~\ref{#1}}

\def\1{\bm{1}}

\DeclareMathAlphabet{\mathsfit}{\encodingdefault}{\sfdefault}{m}{sl}
\SetMathAlphabet{\mathsfit}{bold}{\encodingdefault}{\sfdefault}{bx}{n}

\usepackage{url}
\usepackage{amsmath}
\usepackage{amssymb}
\usepackage{amsfonts}
\usepackage{amsthm}
\usepackage[citecolor=blue, colorlinks]{hyperref}
\usepackage[ruled, vlined, linesnumbered]{algorithm2e}
\usepackage{booktabs}      
\usepackage{siunitx}       
\usepackage{multirow}      
\usepackage{graphicx}     
\usepackage{caption} 
\usepackage{enumitem}
\usepackage{etoc}
\usepackage{colortbl}  
\usepackage{xcolor}
\usepackage{array}

\newtheorem{proposition}{Proposition}[section]

\newcommand{\hl}[1]{\textcolor{black}{#1}}

\SetKwComment{Comment}{$\triangleright$\ }{}

\title{Energy-oriented Diffusion Bridge for Image Restoration with Foundational Diffusion Models}

\iclrfinalcopy
\author{Jinhui Hou$^{1}$, Zhiyu Zhu$^{1,2,*}$, and Junhui Hou$^{1,}$\thanks{Corresponding authors.} \\
$^{1}$Department of Computer Science, City University of Hong Kong;\\
$^{2}$Department of Computer Science, City University of Hong Kong (Dongguan).\\
\texttt{jhhou3-c@my.cityu.edu.hk, zhiyu.zhu@cityu-dg.edu.cn,}\\ \texttt{jh.hou@cityu.edu.hk} \\
}

\begin{document}
\etocdepthtag.toc{mtchapter}
\etocsettagdepth{mtchapter}{subsection}
\etocsettagdepth{mtappendix}{none}

\maketitle

\begin{abstract}
Diffusion bridge models have shown great promise in image restoration by explicitly connecting clean and degraded image distributions. However, they often rely on complex and high-cost trajectories, which limit both sampling efficiency and final restoration quality. To address this, we propose an Energy-oriented diffusion Bridge (E-Bridge) framework to approximate a set of low-cost manifold geodesic trajectories to boost the performance of the proposed method. We achieve this by designing a novel bridge process that evolves over a shorter time horizon and makes the reverse process start from an entropy-regularized point that mixes the degraded image and Gaussian noise, which theoretically reduces the required trajectory energy. 
To solve this process efficiently, we draw inspiration from consistency models to learn a single-step mapping function, optimized via a continuous-time consistency objective tailored for our trajectory, so as to analytically map any state on the trajectory to the target image. Notably, the trajectory length in our framework becomes a tunable task-adaptive knob, allowing the model to adaptively balance information preservation against generative power for tasks of varying degradation, such as denoising versus super-resolution. Extensive experiments demonstrate that our E-Bridge achieves state-of-the-art performance across various image restoration tasks while enabling high-quality recovery with a single or fewer sampling steps. Our project page is \url{https://jinnh.github.io/E-Bridge/}.
\end{abstract}

\section{Introduction}
\vspace{-0.3cm}
Diffusion models \citep{ho2020denoising, song2020score} have established a new state-of-the-art in generative modeling, with a profound impact on image restoration (IR). Two dominant paradigms have emerged. The first involves guiding a standard generative process, which starts from pure Gaussian noise, using the degraded image as a condition \citep{dhariwal2021diffusion,saharia2022image}. Although versatile, this approach suffers from slow inference speeds, as it must traverse a high-energy trajectory across a vast entropy gap from a noise distribution to the image manifold.

To create a more direct transformation, bridge diffusion models \citep{li2023bbdm, liu2023i2sb, yue2023image, zhu2025unidb, wang2025implicit} have gained traction by constructing a stochastic process from the degraded image distribution to the clean one. By starting closer to the target, these models significantly reduce the number of sampling steps. However, the diffusion trajectories in these models are often predefined and sub-optimal, which is typically a fixed or polynomial interpolation. Such trajectories may not represent the most efficient path, or \textit{geodesic}, on the complex data manifold. They often enforce a redundant "re-noising" phase before denoising, leading to unnecessarily high trajectory energy and limiting both restoration quality and computational efficiency. Although recent work has focused on leveraging powerful pretrained generative priors, they often rely on complex "stitching" mechanisms \citep{wang2025irbridge} to connect two distinct diffusion processes, which can introduce inconsistencies.

Instead of patching existing models, we argue that the optimal restoration trajectory should be a low-energy geodesic on the data manifold. To this end, we propose an Energy-oriented diffusion Bridge (E-Bridge) 
framework that fundamentally re-engineers the restoration process. We first construct a more efficient, low-energy trajectory by designing a novel bridge process that evolves over a shorter, more controllable time horizon. This process starts the reverse (restoration) trajectory from an entropy-regularized point, which is a mixture of the degraded image and Gaussian noise, thus theoretically reducing the required control energy and bypassing the inefficient re-noising phase.

To ensure this low-energy trajectory approximates a geodesic, we innovatively leverage a pretrained denoiser not as a separate component, but as an integrated dynamic geodesic guidance field, continuously pulling the trajectory toward the natural image manifold. To traverse this trajectory with maximum efficiency, we draw inspiration from consistency models \citep{song2023consistency, lu2024simplifying}. We learn a single-step mapping function, optimized via a continuous-time consistency objective tailored for our trajectory, which can analytically map any state on the trajectory directly to the final restored image. This eliminates the need for slow, iterative sampling. Notably, the trajectory length in our framework becomes a tunable, task-adaptive knob. This allows our E-Bridge to adaptively balance information preservation (for mild degradations like denoising) against generative power (for severe degradations like super-resolution). Extensive experiments demonstrate that our E-Bridge achieves state-of-the-art performance across a variety of image restoration tasks, while enabling high-quality recovery in a single or very few sampling steps.

In summary, the main contributions of this work are: 
\begin{itemize}[leftmargin=*]
    \item we propose a novel energy-oriented diffusion bridge 
    framework with low-energy data manifold geodesic trajectories and adaptation to current large pretrained models, and give corresponding theoretical analysis compared with state-of-the-art methods; 
    \item we give a closed-form consistency solver that circumvents the need for numerical iteration, enabling direct and single-step inference along the geodesic trajectory.
\end{itemize}

\section{Related Work}
\vspace{-0.2cm}
\textbf{Image Restoration via Bridge Models.}
Recent advances in diffusion-based restoration have moved beyond simple conditioning towards bridge models \citep{bortoli2021diffusion,luo2023image,zhou2023denoising,shi2023diffusion} that define a stochastic trajectory from degraded to clean images. Early works formulated this trajectory as a standard Brownian Bridge (BBDM) \citep{li2023bbdm, liu2023i2sb} or as an optimal transport problem solved via Schrödinger Bridges (SB) \citep{wang2025implicit}. Frameworks like UniDB \citep{zhu2025unidb} further unify these concepts. However, a common thread unites these models: they learn a score or drift function that requires a slow, iterative sampling process for inference.
To overcome the inference bottleneck, UniDB++ \citep{pan2025unidb++} proposes a post-hoc acceleration, where a multi-step model is compressed into a single-step one. The Implicit I2SB (I3SB) \citep{wang2025implicit} introduces a non-Markovian sampling scheme for pre-trained I2SB models. 

\textbf{Image Restoration using Conditional Diffusion Models.}
Another major category of methods conditions a standard generative process that begins from pure noise. Early works \citep{saharia2022image, lugmayr2022repaint, hou2023global, zhang2023unified} train a conditional model from scratch, generating a high-quality image guided by the degraded input \citep{saharia2022image}. A different strategy involves guiding a pre-trained unconditional model during the inference stage. Notable examples include DDRM \citep{kawar2022denoising}, which leverages spectral projections to solve various linear inverse problems, and DiffPIR \citep{zhu2023denoising}, which employs Plug-and-Play optimization to incorporate degradation-specific priors. Recently, the focus has shifted towards efficiently adapting large-scale pre-trained models. This is typically accomplished by fine-tuning task-specific modules, such as the time-aware encoder in StableSR \citep{wang2024exploiting} or the multi-task, prompt-based adapter in UniRestore \citep{chen2025unirestore}.

\section{Preliminaries}
\vspace{-0.1cm}
\label{sec:preliminaries}

The task of image restoration can be conceptualized as transporting a degraded image $\mathbf{Y}$, residing on a low-quality manifold $\mathcal{M}_L$, to its corresponding clean counterpart $\mathbf{X}_0$ on a high-quality manifold $\mathcal{M}_H$. Existing diffusion models approach this via two primary ways:

\textbf{Guided Generative Trajectories.}
These methods \citep{dhariwal2021diffusion,saharia2022image,zhu2025learning} guide a standard reverse diffusion process, which starts from pure Gaussian noise ($\mathbf{X}_T \sim \mathcal{N}(0, \mathbf{I})$), towards the clean image $\mathbf{X}_0$ using the degraded image $\mathbf{Y}$ as a condition. From a Stochastic Optimal Control (SOC) \citep{Kappen2008Stochastic,berner2022optimal,park2024stochastic,zhu2025unidb} perspective, the efficiency of this trajectory is measured by its \textit{energy}, a cost functional representing the total transportation between distributions: \vspace{-0.3cm}
\begin{equation}
    J(\mathbf{u}) = \mathbb{E} \int_0^T\frac{1}{2}\left( \underbrace{\|\dot{\mu}(t)\|^2}_{(A)}  + \underbrace{\| \dot{\mu}(t) - \mathbb{E}[b_{total}(t,\mathbf{X}_t)] \|^2}_{(B)} \right)dt ,
    \label{eq:soc_energy}
\end{equation}
where $\dot{\mu}(t)$ denotes the instantaneous velocity vector of the mean curve, while $\mathbb{E}[b_{total}] $ represents the effective drift field provided by the network. The term (A) defines the transportation energy, quantifying the kinetic cost associated with the velocity profile along the trajectory of $\dot{\mu}(t)$. In contrast, term (B) corresponds to the control energy, measuring the cost required to steer the dynamics along the path, which can also be parameterized to $\gamma\|\mathbb{E}\mathbf{X}_T - \mathbf{Y}\|^2$. Consequently, the optimal scenario is to achieve transport via a low-energy geodesic trajectory that requires minimal control intervention.

Advanced methods like IRBridge~\citep{wang2025irbridge} attempt to mitigate this by leveraging powerful pre-trained generative models. However, this introduces its own complexities, such as the \emph{state distribution mismatch} between the pre-trained model's process and the desired restoration process. To bridge this gap, IRBridge proposes a transition equation that mathematically maps a state $\mathbf{X}_i^\circ$ from the restorative trajectory to a state $\mathbf{X}_j^*$ on the generative trajectory, using an estimate of the clean image $\hat{\mathbf{X}}_0$:\vspace{-0.3cm}
\begin{equation}
    \mathbf{X}_j^* = \alpha \cdot \mathbf{X}_i^\circ + \beta \cdot \hat{\mathbf{X}}_0 + \gamma + \sigma\bm{\epsilon}.
\label{eq:irbridge_transition}
\end{equation}
However, this approach involves the complexity of coupling two distinct and separately defined processes, creating an indirect and potentially fragile connection between the degraded and clean image domains.

\textbf{Direct Bridge trajectories.}
These approaches \citep{liu2023i2sb, zhou2023denoising, yue2023image} construct a more direct stochastic process from $\mathbf{Y}$ to $\mathbf{X}_0$. The simplest form is the standard Brownian Bridge, defined by the stochastic differential equation (SDE) that pins a Wiener process to start at $\mathbf{X}_0$ and end at $\mathbf{Y}$.
In practice, rather than directly solving this SDE, models like I2SB \citep{liu2023i2sb} are built upon an analytical posterior distribution $p(\mathbf{x}_t | \mathbf{X}_0, \mathbf{Y})$. For example, I2SB defines the state $\mathbf{X}_t$ as:
\begin{equation}
    \mathbf{X}_t \sim \mathcal{N}\left(\alpha_t \mathbf{X}_0 + (1-\alpha_t)\mathbf{Y}, \sigma_t^2 \mathbf{I}\right),
    \label{eq:i2sb_path}
\end{equation}
where the variance $\sigma_t^2$ is often a symmetric, arch-shaped function (e.g., $\sigma_t^2 \propto t(1-t)$). The symmetric noise schedule generally forces the reverse process to undergo a "re-noising" phase before denoising. For an asymmetric task whose goal is unidirectional denoising and restoration, this initial noise addition process is redundant in terms of energy, increasing the random process energy of the path. 
In pursuit of a theoretically optimal path, other works explore Schrödinger Bridges, which reframe the problem as finding a trajectory measure $\mathcal{P}$ that minimizes the Kullback-Leibler (KL) divergence from a reference Wiener process measure $\mathcal{W}$, subject to the marginal constraints:\vspace{-0.1cm}
\begin{equation}
    \min_{\mathcal{P}} \text{KL}(\mathcal{P} || \mathcal{W}) \quad \text{s.t.} \quad \mathcal{P}_0 = p(\mathbf{X}_0), \mathcal{P}_T = p(\mathbf{Y}).
    \label{eq:schrodinger_bridge_kl}
    \vspace{-0.1cm}
\end{equation}
Although the formulation in Eq.~(\ref{eq:schrodinger_bridge_kl}) aims to find the lowest-energy path, its standard solver, the Iterative Proportional Fitting (IPF) algorithm, is computationally prohibitive and often unstable for high-dimensional data, rendering it impractical. A more fundamental limitation, common to all aforementioned paradigms, is their reliance on iterative, multi-step sampling to approximate the solution of the underlying differential equation. This process, $\mathbf{x}_{t-\Delta t} \approx \mathbf{x}_t - \mathbf{v}(\mathbf{x}_t, t)\Delta t$, has a computational complexity of $\mathcal{O}(N \times C)$, where $N$ is the number of sampling steps. This inherent sequential dependency fundamentally restricts inference speed and prevents real-time applications.

\section{Proposed Method}
As previously analyzed, existing diffusion-based image restoration bridges are often encumbered by complex, computationally expensive, iterative sampling along high-energy restoration trajectories. To overcome these fundamental limitations, we introduce the Energy-oriented diffusion Bridge (E-Bridge), a novel framework that re-engineers the restoration process for unparalleled efficiency and fidelity. Our approach is built on a dual-pronged strategy.
To be specific, first, we introduce the Energy-oriented Diffusion Bridge (E-Bridge),  
a novel, low-energy, data manifold geodesic trajectory. This trajectory is theoretically more efficient and better suited to the unidirectional nature of image restoration, avoiding the unnecessary re-noising phases of previous bridge models (Sec.~\ref{sec:cgb_trajectory}). Second, we derive the {E-Bridge-Solver}, a principled, single-step mapping function obtained through the analytical inversion of our trajectory equation. Crucially, based on the E-Bridge-Solver, we train the E-Bridge by enforcing a {continuous-time consistency optimization objective} to further boost its performance. This ensures the solver produces a stable, high-fidelity output from any point on the trajectory, effectively learning the underlying geometry of the data manifold (Sec.~\ref{sec:cgb_solver}).

\begin{figure}
    \centering
    \includegraphics[width=0.9\linewidth]{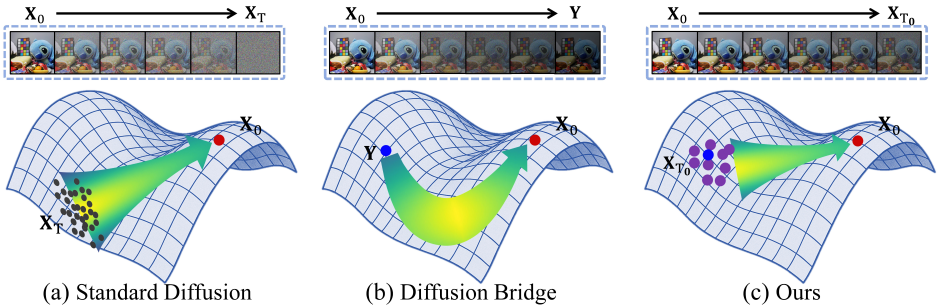}
    \vspace{-0.3cm}
    \caption{\hl{Illustration of diffusion processes for image restoration. (a) Standard Diffusion Models: These traverse a long, high-energy trajectory starting from pure Gaussian noise to the clean image manifold, conditioned on the degraded image. (b) Conventional Bridge Models: These construct a path from the degraded to the clean image but often follow a sub-optimal, high-energy trajectory that includes a redundant "re-noising" phase before denoising. (c) Our E-Bridge: It starts the reverse process from an entropy-regularized point, which is a mixture of the degraded image and noise, thus bypassing the inefficient re-noising phase and creating a more direct and shorter path for restoration.}}
    \label{fig:method_illustration}
    \vspace{-0.3cm}
\end{figure}

\subsection{Energy-oriented Diffusion Bridge}
\label{sec:cgb_trajectory}
We begin by framing image restoration through the lens of data geometry, positing that the task is fundamentally a transport problem between distinct data manifolds. Within this framework, the ideal restoration trajectory between a clean image $\mathbf{X}_0$ and its degraded version $\mathbf{Y}$ should be the geodesic $\gamma(t)$ on the underlying data manifold, representing the most efficient transformation. To regularize the diffusion trajectory following the bridge trajectory, we have formulated the following Proposition \ref{prop:energy_minimization}.
\begin{proposition}[Kinetic Energy Minimization of the Geodesic Expectation]
\label{prop:energy_minimization}
\hl{Let $X_t$ be a stochastic diffusion process defined over the controllable time horizon $t \in [0, T_0]$. If the process follows the manifold geodesic $\gamma(t)$, defined as the trajectory that minimizes the kinetic transport energy between the starting distribution $Y$ and the target distribution $X$, then the deterministic component (expectation) of the resulting trajectory, $\mu(t) = \mathbb{E}[\widetilde{\mathbf{X}}_t]$, must satisfy the following linear transport equation:
\begin{equation}
    \mathbb{E}[\widetilde{\mathbf{X}}_t] =  \left[1 - \left(\frac{t}{T_0}\right)\right]\mathbf{X}_0 + \left(\frac{t}{T_0}\right) \mathbf{Y}.
    \label{eq:cgb_trajectory}
\end{equation}}
\vspace{-0.7cm}
\end{proposition}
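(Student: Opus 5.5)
The plan is to reduce the statement to a one-dimensional calculus-of-variations problem for the mean curve, and then use convexity to get uniqueness. Write $\mu(t)=\mathbb{E}[\widetilde{\mathbf{X}}_t]$. The boundary conditions are $\mu(0)=\mathbf{X}_0$ and $\mu(T_0)=\mathbf{Y}$: the process is pinned to the clean image at $t=0$ and the reverse process starts from the degraded image at $t=T_0$. The relevant cost is term (A) of Eq.~(\ref{eq:soc_energy}), the transport energy $\mathcal{E}[\mu]=\int_0^{T_0}\tfrac12\|\dot\mu(t)\|^2\,dt$.

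First I would justify that only the mean matters. By Jensen's inequality, the kinetic energy of the process dominates that of its mean: $\mathbb{E}\|\dot{\mathbf{X}}\|^2 \ge \|\dot\mu\|^2$, with the stochastic fluctuation contributing a nonnegative variance term that does not depend on how $\mu$ is chosen. Moreover, once the drift matches $\dot\mu$ exactly, the control term (B) is zero. So minimizing the total energy over admissible mean curves reduces to minimizing $\mathcal{E}[\mu]$ subject to the two endpoint constraints.

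Next I would derive the Euler--Lagrange equation. Perturb $\mu\mapsto\mu+\epsilon\eta$ with $\eta(0)=\eta(T_0)=0$ and integrate by parts:
\begin{equation*}
\frac{d}{d\epsilon}\mathcal{E}[\mu+\epsilon\eta]\Big|_{\epsilon=0}=\int_0^{T_0}\langle\dot\mu,\dot\eta\rangle\,dt=-\int_0^{T_0}\langle\ddot\mu,\eta\rangle\,dt .
\end{equation*}
Setting this to zero for all such $\eta$ gives $\ddot\mu=0$. Hence $\mu$ is affine, and imposing the endpoints yields $\mu(t)=(1-t/T_0)\mathbf{X}_0+(t/T_0)\mathbf{Y}$, which is Eq.~(\ref{eq:cgb_trajectory}).

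To avoid regularity assumptions and to get uniqueness directly, I would also give a convexity argument. Write any admissible curve as $\mu=\mu^*+\eta$, where $\mu^*$ is the affine curve. Then
\begin{equation*}
\mathcal{E}[\mu]=\mathcal{E}[\mu^*]+\int_0^{T_0}\langle\dot\mu^*,\dot\eta\rangle\,dt+\tfrac12\int_0^{T_0}\|\dot\eta\|^2\,dt .
\end{equation*}
The cross term vanishes because $\dot\mu^*=(\mathbf{Y}-\mathbf{X}_0)/T_0$ is constant and $\eta$ vanishes at both ends. The last term is nonnegative and equals zero only when $\eta\equiv0$. Equivalently, Cauchy--Schwarz gives $\mathcal{E}\ge\|\mathbf{Y}-\mathbf{X}_0\|^2/(2T_0)$, with equality exactly for constant velocity.

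The main obstacle is the interpretive step of identifying the ``manifold geodesic'' with a minimizer of flat kinetic energy on the mean. In a curved manifold metric the geodesic need not be a straight line. I would handle this by stating explicitly the assumption the paper uses: the mean curve lives in ambient Euclidean space, where the kinetic cost (A) is measured, and the manifold structure enters only through the pretrained denoiser that corrects the stochastic part, not through the expectation. Under that reading, the Euclidean argument above is the whole proof.
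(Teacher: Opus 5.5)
Your proposal is correct and is essentially the paper's own argument. The paper applies Jensen's inequality to $\|\dot\mu\|^2$ on $[0,T_0]$ to obtain $E[\mu]\ge\|\mathbf{Y}-\mathbf{X}_0\|^2/(2T_0)$, with equality if and only if $\dot\mu$ is constant, and then integrates from $\mu(0)=\mathbf{X}_0$ in the flat ambient norm, as you assume; this is exactly your Cauchy--Schwarz remark, your decomposition $\mu=\mu^*+\eta$ just writes the Jensen gap out explicitly, and the Euler--Lagrange step is optional. Your preliminary reduction via $\mathbb{E}\|\dot{\mathbf{X}}\|^2\ge\|\dot\mu\|^2$ is not in the paper and is only heuristic for a genuine diffusion, whose paths are not differentiable; nothing depends on it, because the statement defines the geodesic directly as the kinetic-energy minimizer for the mean, so you can simply drop that step.
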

\begin{proof}
Please refer to Appendix~\ref{Sec:Geodesic_Linearity}.
\vspace{-0.2cm}
\end{proof}

Based on that, we formulate a novel diffusion bridge defined for a controllable time horizon $t \in [0, T_0]$ with $T_0 \in (0, 1]$, which is formulated as:\vspace{-0.3cm}
\begin{equation}
    \underbrace{\mathbf{X}_t = \alpha_t\widetilde{\mathbf{X}}_t + \beta_t\mathbf{X}_T}_{\text{Diffusion Trajectory}}, \quad \text{where} \quad \underbrace{\widetilde{\mathbf{X}}_t = \left[1 - \left(\frac{t}{T_0}\right)\right]\mathbf{X}_0 + \left(\frac{t}{T_0}\right) \mathbf{Y}}_{\text{Inherent Data Geodesic Transition}}.
    \label{eq:cgb_trajectory}
\end{equation}

However, $\alpha_t$ and $\beta_t$ are still under-defined. Given that we leverage the prior of the pretrained rectified flow neural network~\citep{Flux} to boost the reconstruction performance, we further utilize this as a regularization to explore the optimal $\alpha_t$ and $\beta_t$ selection in Proposition~\ref{prop:Adaptation_LPConti}.

\begin{proposition}[Minimal Adaptation Energy via Lipschitz Stability]
\label{prop:Adaptation_LPConti}
\hl{Let $\epsilon_\theta$ be a pretrained Rectified Flow network with Lipschitz constant $K$, originally trained on the trajectory $\mathbf{Z}_t = (1-t)\mathbf{Z}_0 + t\mathbf{Z}_1$. When fine-tuning this network on a new bridge process $\mathbf{X}_t = \alpha_t \tilde{\mathbf{X}}_t + \beta_t \mathbf{X}_T$, the Adaptation Energy, defined as the expected squared deviation in the network's output caused by trajectory mismatch, is minimized if and only if
\begin{equation}
    \alpha_t = 1-t, \text{and} \quad \beta_t = t.
\end{equation}}
\vspace{-0.8cm}
\end{proposition}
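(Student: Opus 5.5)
The plan is to make ``Adaptation Energy'' a concrete functional and then minimize it pointwise in $t$. Identify the pretrained endpoints with the bridge quantities: the clean-side anchor $\mathbf{Z}_0\leftrightarrow\widetilde{\mathbf{X}}_t$ and the noise endpoint $\mathbf{Z}_1\leftrightarrow\mathbf{X}_T$. The pretrained network has only ever seen inputs of the form $\mathbf{Z}_t=(1-t)\mathbf{Z}_0+t\mathbf{Z}_1$ at time $t$. Feeding it the bridge state $\mathbf{X}_t=\alpha_t\widetilde{\mathbf{X}}_t+\beta_t\mathbf{X}_T$ at the same time label shifts the input. Define
\begin{equation*}
\mathcal{E}(\alpha,\beta)=\int_0^{T_0}\mathbb{E}\bigl\|\epsilon_\theta(\mathbf{X}_t,t)-\epsilon_\theta(\mathbf{Z}_t,t)\bigr\|^2\,dt,
\end{equation*}
where $\mathbf{Z}_t$ is built from the same pair $(\widetilde{\mathbf{X}}_t,\mathbf{X}_T)$. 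This is the expected squared deviation of the network output caused purely by trajectory mismatch. By the Lipschitz property with constant $K$,
\begin{equation*}
\mathcal{E}\le K^2\int_0^{T_0}\mathbb{E}\bigl\|(\alpha_t-(1-t))\widetilde{\mathbf{X}}_t+(\beta_t-t)\mathbf{X}_T\bigr\|^2dt.
\end{equation*}

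Next, expand the right-hand side. Since $\mathbf{X}_T$ is Gaussian noise independent of $(\mathbf{X}_0,\mathbf{Y})$, the cross term vanishes. The integrand becomes
\begin{equation*}
(\alpha_t-(1-t))^2\,\mathbb{E}\|\widetilde{\mathbf{X}}_t\|^2+(\beta_t-t)^2 d,
\end{equation*}
with $d$ the dimension. This is a sum of two nonnegative quadratics with strictly positive weights, since $\widetilde{\mathbf{X}}_t$ is a.s.\ nonzero for natural images. The bound is therefore zero for every $t$ exactly when $\alpha_t=1-t$ and $\beta_t=t$, which gives sufficiency: with this choice $\mathbf{X}_t=\mathbf{Z}_t$ pathwise, so $\mathcal{E}=0$, the global minimum of a nonnegative functional.

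For the ``only if'' direction the upper bound is not enough, because the Lipschitz inequality only goes one way. I would obtain a matching lower bound in one of two ways. The first is a local argument: linearize $\epsilon_\theta(\mathbf{X}_t,t)-\epsilon_\theta(\mathbf{Z}_t,t)\approx J_t(\mathbf{X}_t-\mathbf{Z}_t)$, and assume the Jacobian is nondegenerate, so that $\|J_t v\|\ge k\|v\|$ for some $k>0$ (bi-Lipschitz on the data support). The second is to interpret ``Adaptation Energy'' as the Lipschitz-controlled surrogate $K^2\,\mathbb{E}\|\mathbf{X}_t-\mathbf{Z}_t\|^2$ itself, which is the natural reading of ``via Lipschitz stability.'' Under either reading the energy is a positive-definite quadratic in $(\alpha_t-(1-t),\beta_t-t)$. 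Hence it vanishes, and is minimized, iff both deviations are zero for almost every $t$; continuity of the schedules then gives equality for all $t$.

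The main obstacle is precisely this necessity step: a merely $K$-Lipschitz network could be insensitive to some input directions, which would make the minimizer non-unique. I would resolve it by stating the bi-Lipschitz (injective Jacobian) assumption explicitly, or by adopting the surrogate definition. A secondary subtlety is matching time scales: the bridge lives on $[0,T_0]$ while the pretrained flow lives on $[0,1]$. I would note that the network is queried at the same $t$, so no reparametrization enters, and that $T_0$ affects only $\widetilde{\mathbf{X}}_t$, which appears as a common anchor and drops out of the minimization.
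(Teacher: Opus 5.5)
Your proposal is correct and follows the paper's proof essentially step for step. It uses the same Lipschitz upper bound $\mathcal{E}\le K^2\int\mathbb{E}\|\mathbf{X}_t-\mathbf{Z}_t\|^2dt$ with $\mathbf{Z}_t$ built from the same pair $(\widetilde{\mathbf{X}}_t,\mathbf{X}_T)$, the same decoupled quadratic $C_1(\alpha_t-(1-t))^2+d(\beta_t-t)^2$ with vanishing cross term, and the same strict-positivity argument. Your caution about the ``only if'' direction is well founded: the paper infers uniqueness for $\mathcal{E}$ merely because any other schedule strictly increases the upper bound, which does not follow in general (e.g.\ a constant network gives $\mathcal{E}\equiv 0$ for every schedule), so the paper is in effect adopting your second reading, in which the surrogate $\mathcal{J}$ is the quantity being minimized.
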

\begin{proof}
Please refer to Appendix~\ref{Sec:Noise_scheduler}.
\vspace{-0.2cm}
\end{proof}

These selections of $\alpha_t$ and $\beta_t$ ensure the process matches the linear signal-to-noise evolution inherent to Rectified Flow models. In the following selection, we introduce the consistency-based tuning to further boost the effectiveness and efficiency of the reconstruction algorithm.

\subsection{Consistency E-Bridge Solver}
\label{sec:cgb_solver}
Although the E-Bridge defined in Sec.~\ref{sec:cgb_trajectory} provides a theoretically efficient trajectory blueprint, it presents two subsequent challenges: first, how to traverse this trajectory without resorting to slow, iterative ODE solvers, and second, how to ensure this traversal remains faithful to the complex, underlying data manifold. To address these issues, we introduce a consistency E-Bridge solver, which is not an iterative sampler but a direct single-step mapping function. Its design is a synthesis of two core principles: a principled architecture derived from the analytical inversion of the E-Bridge trajectory equation and a powerful training objective based on continuous-time geodesic consistency.

\textbf{Solver Derivation via Manifold-Guided Trajectory Inversion.}
A key theoretical insight of our work is that the endpoint $\mathbf{X}_0$ of the E-Bridge trajectory can be solved analytically. This requires reframing the problem from simulating a stochastic trajectory to solving a deterministic equation. The challenge lies in the fact that the noise term $\mathbf{X}_T$ is unknown. We resolve this by positing that for any point $\mathbf{X}_t$ on the trajectory, a pretrained flow-matching network $\bm{\epsilon}_\theta(\cdot)$ can provide an instantaneous estimate of the vector pointing to the terminal noise:\vspace{-0.1cm}
\begin{equation}
    \bm{\epsilon}_\theta(\mathbf{X}_t) = \mathbf{X}_T - \mathbf{X}_0.
    \label{eq:eps_def}
    \vspace{-0.1cm}
\end{equation}
For any state $\mathbf{X}_t$ residing in the ambient space (potentially off the data manifold), the denoiser $\bm{\epsilon}_\theta(\cdot)$ allows us to define an instantaneous projection onto the learned manifold of clean images. This projection, denoted as $\hat{\mathbf{X}}_0(\mathbf{X}_t)$, represents the denoiser's estimate of the clean image corresponding to $\mathbf{X}_t$. Consequently, the vector $\hat{\mathbf{X}}_0(\mathbf{X}_t) - \mathbf{X}_t$ forms a correction field, providing the precise direction needed to steer the state $\mathbf{X}_t$ back towards the manifold. It is this ability to provide a geodesic transition that makes $\bm{\epsilon}_\theta(\cdot)$ the ideal tool for our trajectory inversion.

By substituting the relation from Eq.~(\ref{eq:eps_def}) into our E-Bridge trajectory definition in Eq.~(\ref{eq:cgb_trajectory}), we leverage this geometric correction to transform the trajectory into a single implicit algebraic equation where $\mathbf{X}_0$ is the sole unknown:\vspace{-0.2cm}
\begin{equation}
    \mathbf{X}_t = (1-t)\left[\left(1 - \left(\frac{t}{T_0}\right)\right)\mathbf{X}_0 + \left(\frac{t}{T_0}\right) \mathbf{Y}\right] + t(\bm{\epsilon}_\theta(\mathbf{X}_t) + \mathbf{X}_0).
    \label{eq:implicit_x0}
\end{equation}
The linearity of this equation with respect to $\mathbf{X}_0$ permits a closed-form solution via algebraic inversion. This yields the principled architecture of our E-Bridge solver, $\mathcal{F}_\theta(\cdot)$:
\begin{equation}
    \hat{\mathbf{X}}_0 := \mathcal{F}_{\mathbf{\theta}}(\mathbf{X}_t, \mathbf{Y}, t) = \frac{\mathbf{X}_t - A(t) \mathbf{Y} - B(t) \bm{\epsilon}_{\mathbf{\theta}}(\mathbf{X}_t)}{C(t)},
    \label{eq:solver_F}
\end{equation}
where the coefficients $A(t) = (1-t) \left(\frac{t}{T_0}\right)$ , $B(t)=t$, $C(t)=1 - (1-t) \left(\frac{t}{T_0}\right)$ are derived directly from the algebraic manipulation of Eq. (\ref{eq:implicit_x0}). This formulation reveals the true power of our E-Bridge model: it is not merely an interpolator, but a direct solver for the endpoint of a geometrically motivated trajectory.

\textbf{Training as Energy-oriented Geodesic Consistency Enforcement.}
The training objective is endowed with a clear geometric meaning. We are not just forcing a network to be self-consistent; we are enforcing energy-oriented geodesic consistency. This principle asserts that for any point on a given geodesic trajectory, a perfect solver must map it to the same, invariant endpoint $\mathbf{X}_0$.

In the continuous-time domain, this is equivalent to requiring the solver's output to have a total derivative of zero along the path. We enforce this directly by adopting the continuous-time consistency objective \citep{song2023consistency, lu2024simplifying}.

Instead of a discrete distance metric, our training objective is defined by its gradient with respect to the model parameters $\theta$. Following the formulation of continuous-time consistency models, this gradient is given by:
\begin{equation}
    \nabla_\theta \mathcal{L}_{\text{E-Bridge}}(\theta) = \mathbb{E}_{\mathbf{X}_t, \mathbf{Y}, t} \left[ \nabla_\theta \left( \mathcal{F}_\theta(\mathbf{X}_t, \mathbf{Y}, t)^\top \cdot \text{sg}\left(\frac{d\mathcal{F}_{\theta^-}(\mathbf{X}_t, \mathbf{Y}, t)}{dt}\right) \right) \right],
    \label{eq:loss_geo_continuous}
\end{equation}
where $\frac{dF_{\theta^-}}{dt}$ is the total derivative of the target solver's output along the geodesic trajectory, and $\text{sg}(\cdot)$ is the stop-gradient operator.

This objective enforces geodesic consistency in the continuous domain. The term $\text{sg}(\frac{d\mathcal{F}_{\theta^-}}{dt})$ acts as an inconsistency tangent vector; it measures the instantaneous "drift" of the solver's output at time $t$. The optimization process minimizes the inner product between the online solver's output $\mathcal{F}_\theta$ and this inconsistency vector. The only stable equilibrium for this process across the entire trajectory is for the inconsistency vector itself to become zero. This elegantly forces the solver to satisfy the condition $\frac{d\mathcal{F}_\theta}{dt} = \mathbf{0}$, ensuring its output remains constant and thus adheres to the principle of geodesic consistency.

By optimizing this continuous objective, we train the underlying network $\epsilon_\theta$ to accurately capture the local geometry of the data manifold. This, in turn, allows the solver $\mathcal{F}_\theta$ to function as an effective and efficient geodesic solver. This training enables both high-fidelity single-step restoration and the possibility of iterative refinement along the learned optimal path. The training process is illustrated in Algorithm \ref{alg2:Distill}.

\begin{algorithm}[t]
\SetKwInOut{KIN}{Input}
\SetKwInOut{KOUT}{Output}
\SetKwInput{KwReturn}{Return}
\caption{Training Process of Energy-oriented Diffusion Bridge
}
\label{alg2:Distill}
\footnotesize
\LinesNumbered
\KIN{Diffusion model parameter $\bm{\theta}$, dataset distribution $P\left(\mathbf{X}_0,\mathbf{Y}\right)$; Hyperparameters: trajectory length $T_0$.}
\KOUT{Updated parameter $\bm{\theta}$.}
    Initialize parameters $\theta$ with the pretrained diffusion model.\\
    \While{not converged}{
        \tcp{\scriptsize Sample data pairs, timestep, and noise.} 
        $\{\mathbf{X}_0, \mathbf{Y}\} \sim P\left(\mathbf{X}_0,\mathbf{Y}\right)$, $t \sim \mathcal{U}(0, T_0]$, $\mathbf{X}_T \sim\mathcal{N}(0, \mathbf{I})$;
        
        \tcp{\scriptsize Initialize state interpolation by Eq.~(\ref{eq:cgb_trajectory})}
        $\widetilde{\mathbf{X}}_t \leftarrow  \left[1 -\left(\frac{t}{T_0}\right)\right]\mathbf{X}_0 +  \left(\frac{t}{T_0}\right)\mathbf{Y}$;
        
        $\mathbf{X}_t \leftarrow (1-t) \widetilde{\mathbf{X}}_t  + t \mathbf{X}_T$;
        
        \tcp{\scriptsize Estimate clean image through Eq. (\ref{eq:solver_F})}
        $\widehat{\mathbf{X}}_0 \leftarrow \mathcal{F}_{\mathbf{\theta}}(\mathbf{X}_t, \mathbf{Y}, t)$;
        
        \tcp{\scriptsize Model update based on consistency  optimization in Eq. (\ref{eq:loss_geo_continuous})} 
        $\bm{\theta} \leftarrow \bm{\theta} - \nabla_{\bm{\theta}} \langle\widehat{\mathbf{X}}_0,  \frac{d \mathcal{F}_{\mathbf{\theta}^-}(\mathbf{X}_t,\mathbf{Y}, t)}{d t}\rangle$;
    }
\KwReturn{$\bm{\theta}$.}
\end{algorithm}

\begin{table*}[t]
\tiny
  \centering
  \caption{\hl{Quantitative comparisons of different diffusion-based methods across multiple image restoration tasks. The best results are highlighted in \textbf{bold}. $\dag$ represents that we utilized the official public code and recommended settings to train the baseline on the same training dataset as ours.}}
  \vspace{-0.3cm}
  \label{tab:quantitative_comparisons}
  \sisetup{detect-weight, mode=text}
  \setlength{\tabcolsep}{3mm}{
  \newcommand{\B}[1]{\bfseries #1}
  \resizebox{0.9\textwidth}{!}{
  \begin{tabular}{c|l|c|ccccc}
    \toprule[1.5pt]
    Task & \multicolumn{1}{c|}{Method} & {NFE $\downarrow$} & {PSNR $\uparrow$} & {LPIPS $\downarrow$} & {FID $\downarrow$} & {NIQE $\downarrow$}  & {MUSIQ $\uparrow$} \\
    \midrule
    & DiffBIR \citep{lin2024diffbir}  &50   &22.601 &0.404 &77.767 &4.333 &64.767 \\
    & AutoDIR \citep{jiang2023autodir} &25  &23.953 &0.419 &64.995 &5.599 &48.103 \\
    & UniDB$^\dag$ \citep{zhu2025unidb} &100 &20.778 &0.477 &87.135  &3.882 &62.004  \\
    & \hl{IRBridge$^\dag$} \citep{wang2025irbridge} &\hl{100} & \hl{23.615} & \hl{0.415} & \hl{59.539} & \hl{5.215} & \hl{65.535}  \\
    & MaRS$^\dag$  \citep{li2025mars}   &20  &23.325 &0.525 &104.665 &6.328 &53.162  \\
    & MaRS$^\dag$  \citep{li2025mars}   &10  &23.786 &0.539 &101.978 &7.251 &49.489  \\
    & \hl{UniDB++$^\dag$} \citep{pan2025unidb++}  &\hl{10} &\hl{23.383}  &\hl{0.469}  &\hl{79.383}  &\hl{5.709} &\hl{57.237} \\
    & \hl{UniDB++$^\dag$} \citep{pan2025unidb++}  &\hl{5}  &\hl{\B{24.393}}  &\hl{0.494}  &\hl{70.664}  &\hl{5.904} &\hl{63.087} \\
    \rowcolor{lightgray} \cellcolor{white}
    & E-Bridge (Ours)  &10 &21.282 &\B{0.346}  &57.837  &\B{3.839}  &\B{70.868}  \\
    \rowcolor{lightgray} \cellcolor{white}
    & E-Bridge (Ours)  &5  &22.477 &0.356  &\B{55.236}  &4.373 &65.342   \\ \cmidrule{2-8}
    & \hl{UniDB++$^\dag$} \citep{pan2025unidb++}  &\hl{1}  &\hl{23.983}  &\hl{0.658}  &\hl{83.718}  &\hl{7.596} &\hl{33.222} \\
    \rowcolor{lightgray} \cellcolor{white} \multirow{-13}{*}{\rotatebox{90}{Super-resolution}}
    & \hl{E-Bridge (Ours)}  &\hl{1}  &\hl{24.094}  &\hl{0.452}  &\hl{72.001}  &\hl{6.251}  &\hl{44.605}\\
    \midrule
    \midrule
    & DA-CLIP \citep{luo2023controlling}   &100 &27.027 &0.301 &39.526 &5.091 &58.098  \\
    & AutoDIR \citep{jiang2023autodir}   &25 &28.063 &0.282 &28.452 &5.095 &58.399  \\
    & UniDB$^\dag$ \citep{zhu2025unidb}  &100 &27.956 &0.212 &28.256 &\B{3.019} &63.782  \\
    & \hl{IRBridge$^\dag$} \citep{wang2025irbridge} &\hl{100} & \hl{25.696} &\hl{0.273} &\hl{41.936} &\hl{4.376} & \hl{69.053} \\
    & MaRS$^\dag$  \citep{li2025mars} &20  &28.465 &0.220  &29.338  &3.993 &63.004   \\
    & MaRS$^\dag$  \citep{li2025mars} &10  &28.252 &0.293  &33.204  &5.185 &57.677   \\
    & \hl{UniDB++$^\dag$} \citep{pan2025unidb++}  &\hl{10} &\hl{28.515}  &\hl{0.224}  &\hl{33.310}  &\hl{4.144} &\hl{68.863} \\
    & \hl{UniDB++$^\dag$} \citep{pan2025unidb++}  &\hl{5}  &\B{29.308}  &\hl{0.214}  &\hl{34.872}  &\hl{4.394} &\hl{69.048} \\
    \rowcolor{lightgray} \cellcolor{white}
    & E-Bridge (Ours)  &10 &26.069 &0.184  &26.611 &3.514 &\B{69.736}   \\
    \rowcolor{lightgray} \cellcolor{white}
    & E-Bridge (Ours)  &5  &26.649 &\B{0.182}  &\B{26.286} &4.492 &67.507  \\ \cmidrule{2-8}
    & \hl{UniDB++$^\dag$} \citep{pan2025unidb++}  &\hl{1}  &\hl{21.844}  &\hl{0.613}  &\hl{92.362}  &\hl{8.197} &\hl{45.150} \\
    \rowcolor{lightgray} \cellcolor{white} \multirow{-13}{*}{\rotatebox{90}{Denoising}}
    & \hl{E-Bridge (Ours)}  &\hl{1}  &\hl{25.241}  &\hl{0.258}  &\hl{36.967}  &\hl{4.535}  &\hl{58.954} \\
    \midrule
    \midrule
    & DA-CLIP \citep{luo2023controlling} &100 &31.241 &\B{0.061} &\B{24.066} &4.817 &67.593        \\
    & AutoDIR \citep{jiang2023autodir}   &50  &\B{32.332} &0.092 &26.789 &3.365 &68.723        \\
    & UniDB$^\dag$ \citep{zhu2025unidb} &100 &30.344 &0.071 &25.570 &4.736 &67.599    \\
    & \hl{IRBridge$^\dag$} \citep{wang2025irbridge} &\hl{100} &\hl{28.771} &\hl{0.125} &\hl{54.814} &\hl{3.959} &\hl{65.012} \\
    & MaRS$^\dag$  \citep{li2025mars}   &20  &31.664 &0.070 &24.554 &5.833 &66.249 \\
    & MaRS$^\dag$  \citep{li2025mars}   &10  &30.359 &0.167 &41.186 &7.749 &59.468     \\
    & \hl{UniDB++$^\dag$} \citep{pan2025unidb++}  &\hl{10} &\hl{32.151} &\hl{0.068}  &\hl{26.184}  &\hl{3.577} &\hl{69.761} \\
    & \hl{UniDB++$^\dag$} \citep{pan2025unidb++}  &\hl{5}  &\hl{32.327} &\hl{0.072}  &\hl{28.866}  &\hl{3.515} &\hl{\B{70.498}}  \\
    \rowcolor{lightgray} \cellcolor{white}
    & E-Bridge (Ours)  &10 &29.222 &0.079 &30.551 &\B{3.221} &70.006  \\
    \rowcolor{lightgray} \cellcolor{white}
    & E-Bridge (Ours)  &5  &29.611 &0.080 &29.726 &3.449 &67.379   \\ \cmidrule{2-8}
    & \hl{UniDB++$^\dag$} \citep{pan2025unidb++}  &\hl{1}  &\hl{25.420}  &\hl{0.174}  &\hl{61.693}  &\hl{3.683} &\hl{64.523} \\
    \rowcolor{lightgray} \cellcolor{white} \multirow{-13}{*}{\rotatebox{90}{Raindrop Removal}}
    & \hl{E-Bridge (Ours)}  &\hl{1}  &\hl{29.220}  &\hl{0.085}  &\hl{33.408}  &\hl{3.322}  &\hl{67.301} \\
    \midrule
    \midrule
    & DA-CLIP \citep{luo2023controlling} & 100 &23.091 &0.117 &39.428 &4.359 &71.245  \\
    & AutoDIR \citep{jiang2023autodir}   & 50  &\B{26.961} &0.098 &38.118 &5.208 &70.500  \\
    & \hl{IRBridge$^\dag$} \citep{wang2025irbridge} &\hl{100} &\hl{23.913} &\hl{0.116} &\hl{41.007} &\hl{4.664} &\hl{64.926} \\
    & DiffUIR \citep{zheng2024selective} & 3   &25.160 &0.160 &71.703  &4.904 &71.390   \\
    \rowcolor{lightgray} \cellcolor{white}
    & E-Bridge (Ours)  &10 &25.538  &\B{0.097} &41.197  &\B{4.198}  &\B{71.684}  \\
    \rowcolor{lightgray} \cellcolor{white}
    & E-Bridge (Ours)  &5  &25.847  &0.107 &38.996  &4.211 &70.868    \\ \cmidrule{2-8}
    & AdaIR \citep{cui2024adair}   &1 &22.634 &0.168 &58.106  &4.713  &70.901   \\
    \rowcolor{lightgray} \cellcolor{white} \multirow{-8}{*}{\rotatebox{90}{Low-light}}
    & \hl{E-Bridge (Ours)}  &\hl{1}  &\hl{24.656}  &\hl{0.133}  &\hl{54.787}  &\hl{4.331}  &\hl{66.805} \\
    \midrule
    \midrule
    & UniDB$^\dag$ \citep{zhu2025unidb}  &100 &19.719 &0.283  &61.063  &5.768 &61.031  \\
    & \hl{IRBridge$^\dag$} \citep{wang2025irbridge} &\hl{100} &\hl{\B{21.441}} &\hl{0.277} &\hl{36.367} &\hl{6.158} &\hl{61.199} \\
    & MaRS$^\dag$  \citep{li2025mars}    &20  &19.965 &0.239  &40.067  &5.697 &60.894  \\
    & MaRS$^\dag$  \citep{li2025mars}    &10  &20.118 &0.346  &41.186  &7.679 &55.420  \\
    & \hl{UniDB++$^\dag$} \citep{pan2025unidb++}  &\hl{10}  &\hl{20.109}  &\hl{0.191}  &\hl{37.558}  &\hl{5.416}  &\hl{61.466} \\
    & \hl{UniDB++$^\dag$} \citep{pan2025unidb++}  &\hl{5}  &\hl{20.527}  &\hl{\B{0.183}}  &\hl{37.335}  &\hl{5.452} &\hl{62.054} \\
    \rowcolor{lightgray} \cellcolor{white} 
    & E-Bridge (Ours)  &10 &20.263 &0.313 &42.973 &\B{5.223} &\B{63.932}  \\
    \rowcolor{lightgray} \cellcolor{white}
    & E-Bridge (Ours)  &5  &20.849 &0.230 &36.194 &5.437 &62.155 \\ \cmidrule{2-8}
    & \hl{UniDB++$^\dag$} \citep{pan2025unidb++}  &\hl{1}  &\hl{20.181}  &\hl{0.266}  &\hl{49.057}  &\hl{5.603}  &\hl{55.618} \\
    \rowcolor{lightgray} \cellcolor{white}\multirow{-11}{*}{\rotatebox{90}{Demoiréing}} 
    & \hl{E-Bridge (Ours)}  &\hl{1}  &\hl{20.715}  &\hl{0.254}  &\hl{\B{34.414}}  &\hl{5.342}  &\hl{56.788} \\
    \bottomrule[1.5pt]
  \end{tabular}}}\vspace{-0.7cm}
\end{table*}

\subsection{Trajectory Adaptivity of E-Bridge for Image Restoration}
\label{sec:path_adaptivity}
A significant, yet often unaddressed, limitation of many diffusion-based restoration methods is their reliance on a one-size-fits-all restoration path. Whether for denoising, super-resolution, or other tasks, these models typically employ a fixed diffusion trajectory (e.g., a reverse process from $t = 1$ to $t = 0$). This approach ignores the heterogeneous nature of image degradations: the information gap between a noisy image and its clean version is vastly different from the gap between a low-quality image and its high-quality target. Our E-Bridge framework provides a principled mechanism for addressing this through the dynamic trajectory length parameter, $T_0$, which is more than just a trajectory shortener; it is a powerful task-adaptive control knob that adapts the restoration process to the specific nature of the input degradation. The choice of $T_0$ directly determines the composition of the restoration's starting point, $\mathbf{X}_{T_0} = (1 - T_0)\mathbf{Y} + T_0\mathbf{X}_T$. This allows us to precisely balance the trade-off between preserving information from the degraded input and leveraging the generative power of the diffusion model. We call this the information-entropy trade-off.

For mild degradations like denoising, the degraded image $\mathbf{Y}$ contains a substantial amount of reliable high-frequency structure and is already a strong prior for $\textbf{X}_0$. Here, we desire high \textit{information preservation}. By choosing a small $T_0$, the starting point $\mathbf{X}_{T_0}$ remains very close to $\mathbf{Y}$. The restoration process becomes a short, targeted refinement path, correcting deviations without unnecessarily destroying the trustworthy information already present.

For severe degradations like super-resolution, $\mathbf{Y}$ is a poor prior for \hl{$\textbf{X}_0$}. It lacks fundamental high-frequency details, and its structure may contain misleading artifacts. Relying heavily on it would cap the restoration quality. Here, we need high \textit{generative power}. By choosing a large $T_0$ (e.g., $T_0 \to 1$), the starting point $\mathbf{X}_{T_0}$ becomes dominated by Gaussian noise, effectively erasing the unreliable details of $\textbf{Y}$ while retaining it as a faint structural guide. This provides the model with a longer, higher-entropy path, giving it the necessary room to hallucinate complex, realistic details from scratch, conditioned on the low-frequency cues from $\textbf{Y}$.

\begin{figure}[t]
    \centering
    \includegraphics[width=0.9\linewidth]{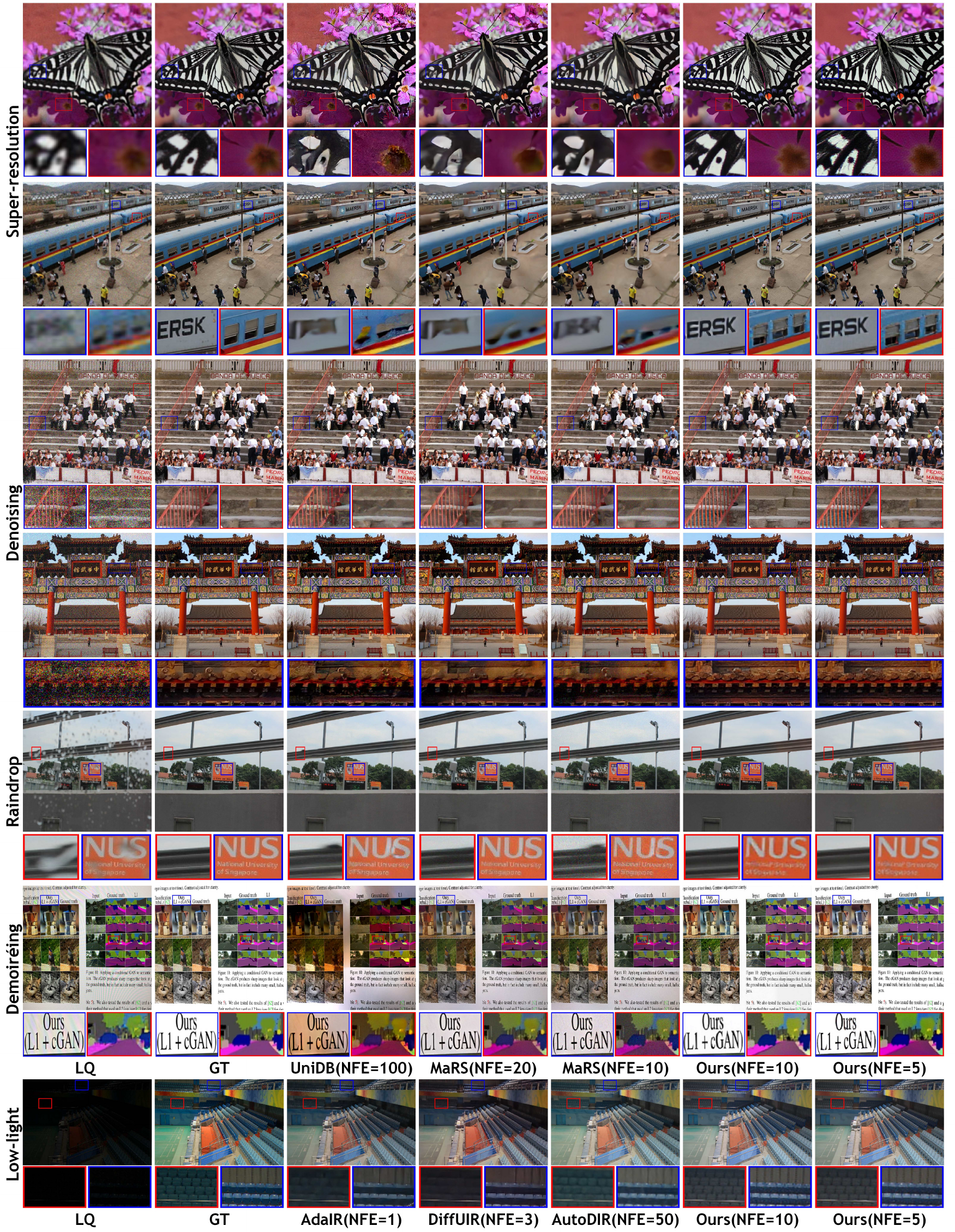} 
    \caption{\hl{Visual comparison of different methods across various tasks.}}
    \label{fig:visual}
    \vspace{-0.7cm}
\end{figure}

\section{Experiments}
\subsection{Experiment Settings}
\textbf{Datasets.}
We conducted a comprehensive evaluation of our proposed method across five challenging image restoration tasks, including super-resolution, denoising, raindrop removal, low-light image enhancement, and image demoiréing. Specifically, for super-resolution, we trained on 3,450 images from DIV2K~\citep{Agustsson_2017_CVPR_Workshops} and Flickr2K~\citep{lim2017enhanced}, with degradations generated by Real-ESRGAN~\citep{wang2021real}. We tested on 100 images from the DIV2K validation set. For the denoising task, we utilized the same high-quality images from the SR task, synthetically corrupted with Gaussian noise ($\sigma=50$). For raindrop removal, we trained on 861 images from Raindrop~\citep{qian2018attentive} and used 58 images for testing. For low-light enhancement, we employed the LOLv1~\citep{wei2018deep}. We used the UHDM dataset \citep{yu2022towards} for image demoiréing, with a split of 4,500 training and 500 testing images.

\textbf{Implementation details.}
We constructed our E-Bridge framework based on the large-scale FLUX-dev model~\citep{Flux}. The model was then fine-tuned for 10,000 iterations using the Adam optimizer with a learning rate of $2 \times 10^{-5}$. We used a training patch size of $1024 \times 1024$ and an effective batch size of 128, achieved via gradient accumulation. The trajectory horizon $T_0$ was sampled from the interval $[0.2, 0.95]$ during training, which endows the model with strong adaptability. Consequently, at inference time, $T_0$ becomes a tunable knob that can be set on demand to precisely control the restoration process. This allows us to select a smaller $T_0$ for tasks requiring high fidelity and a larger $T_0$ for tasks demanding stronger generative power. All experiments were conducted on NVIDIA A800 GPUs.

\textbf{Evaluation metrics.}
For quantitative evaluation, we employ a comprehensive suite of metrics: full-reference scores such as Peak Signal-to-Noise Ratio (PSNR) within the YCbCr color space and LPIPS; no-reference perceptual scores including NIQE~\citep{mittal2012making}, and MUSIQ~\citep{ke2021musiq}; and the distributional metric FID. We also report the Number of Function Evaluations (NFE) to measure computational efficiency. To leverage the full generative capacity of the pre-trained FLUX model and account for the ill-posed nature of image restoration, we avoid overly constraining the output to the ground-truth reference. Consequently, our evaluation relies heavily on non-reference and perceptual scores such as NIQE and MUSIQ.

\begin{table*}[t]
\tiny
\centering
\caption{\hl{Ablation study on the impact of the proposed geodesic trajectory. Specifically, we replaced the geodesic trajectory with two other types of trajectories, i.e., \textbf{a}) the standard diffusion trajectory, starting from pure Gaussian noise to the clean image manifold, and \textbf{b}) the conventional bridge trajectory, which constructs a trajectory from the degraded to the clean image, often follows a ”re-noising” phase before denoising. Best results in each row are in \textbf{bold}.}}
\label{tab:ablation_traj}
\vspace{-3mm}
\setlength{\tabcolsep}{1.5mm}{
\resizebox{1.0\textwidth}{!}{%
\begin{tabular}{c|c|cccc|cccc|cccc}
\toprule
\multirow{2}{*}{\textbf{Task}} &\multirow{2}{*}{\textbf{NFE}} & \multicolumn{4}{c|}{{(a)}} &  \multicolumn{4}{c|}{(b)} &\multicolumn{4}{c}{Ours} \\
\cmidrule(lr){3-6} \cmidrule(lr){7-10} \cmidrule(lr){11-14}
& & PSNR$\uparrow$&LPIPS$\downarrow$&MUSIQ$\uparrow$&NIQE$\downarrow$ &  PSNR$\uparrow$&LPIPS$\downarrow$&MUSIQ$\uparrow$&NIQE$\downarrow$ &  PSNR$\uparrow$&LPIPS$\downarrow$&MUSIQ$\uparrow$&NIQE$\downarrow$\\
\midrule
\multirow{3}{*}{{Super-resolution}} 
&10 &20.527&0.464&63.476&3.973 &19.945&0.446&69.157&4.958 &\textbf{21.282}&\textbf{0.346}&\textbf{70.868}&\textbf{3.839} \\
&5  &21.016&0.495&56.709&5.657 &21.443&0.502&63.068&\textbf{3.902} &\textbf{22.477}&\textbf{0.356}&\textbf{65.342}&4.373 \\
&1  &19.667&0.486&31.919&7.166 &22.039&0.481&35.552&\textbf{4.267} &\textbf{24.094}&\textbf{0.452}&\textbf{44.605}&6.251 \\
\midrule
\multirow{3}{*}{{Denoising}} 
&10 &24.241&0.288&64.724&4.952 &24.391&0.261&\textbf{69.976}&5.575 &\textbf{26.069}&\textbf{0.184}&69.736&\textbf{3.514} \\
&5  &23.701&0.316&60.090&5.235 &25.279&0.235&65.870&\textbf{3.807} &\textbf{26.649}&\textbf{0.182}&\textbf{67.507}&4.492 \\
&1  &21.693&0.460&45.747&5.889 &24.516&0.371&51.381&4.552 &\textbf{25.241}&\textbf{0.258}&\textbf{58.954}&\textbf{4.535} \\
\midrule
\multirow{3}{*}{{Raindrop}} 
&10 &24.919&0.136&68.559&3.229 &26.346&0.148&67.546&3.428 &\textbf{29.222}&\textbf{0.079}&\textbf{70.006}&\textbf{3.221} \\
&5  &24.702&0.155&66.448&4.023 &27.471&0.118&64.260&3.462 &\textbf{29.611}&\textbf{0.080}&\textbf{67.379}&\textbf{3.449} \\
&1  &23.990&0.319&52.946&6.189 &26.202&0.252&55.173&3.528 &\textbf{29.220}&\textbf{0.085}&\textbf{67.301}&\textbf{3.322} \\
\midrule
\multirow{3}{*}{{Low-light}} 
&10 &23.375&0.144&68.498&\textbf{3.881} &22.176&0.181&70.109&4.234 &\textbf{25.538}&\textbf{0.097}&\textbf{71.684}&4.198 \\
&5  &22.706&0.145&66.551&\textbf{3.746} &22.316&0.157&69.690&4.300 &\textbf{25.847}&\textbf{0.107}&\textbf{70.868}&4.211 \\
&1  &21.634&0.333&48.405&4.365 &23.661&0.358&52.948&4.399 &\textbf{24.656}&\textbf{0.133}&\textbf{66.805}&\textbf{4.331} \\
\midrule
\multirow{3}{*}{{Demoiréing}} 
&10 &19.260&0.344&60.277&5.628 &19.956&\textbf{0.252}&60.216&5.697 &\textbf{20.263}&0.313&\textbf{63.932}&\textbf{5.223} \\
&5  &19.601&0.429&56.406&6.342 &20.076&0.240&59.877&\textbf{5.161} &\textbf{20.849}&\textbf{0.230}&\textbf{62.155}&5.437 \\
&1  &19.301&0.469&44.049&6.840 &19.468&0.372&51.601&5.776 &\textbf{20.715}&\textbf{0.254}&\textbf{56.788}&\textbf{5.342} \\
\bottomrule
\end{tabular}%
}}
\vspace{-0.5cm}
\end{table*}

\subsection{Comparison with State-of-the-Art Methods}
As presented in Table \ref{tab:quantitative_comparisons}, our proposed E-Bridge consistently achieves state-of-the-art or highly competitive performance on a comprehensive suite of perceptual and distributional metrics, including LPIPS, FID, NIQE, and MUSIQ. Crucially, it accomplishes this with a remarkable reduction in computational cost, requiring only 5 to 10 NFE. This highlights the effectiveness and efficiency of our geodesic trajectory and consistency-based solver. 
Although not always leading in PSNR, our approach consciously prioritizes perceptual realism over pixel-wise accuracy. Methods that narrowly optimize for PSNR are known to produce overly smoothed results that lack convincing high-frequency details. Our framework, by contrast, is capable of generating perceptually rich images. As visually substantiated in Figure \ref{fig:visual}, our method can even produce results with sharper details and more pleasing textures than the original GT, which may contain slight softness or compression artifacts. This underscores E-Bridge's ability to generate an idealized, visually plausible outcome rather than simply mimicking a potentially imperfect ground-truth.

\subsection{Ablation Study}

\textbf{\hl{Effectiveness of the E-Bridge.}}
\hl{We compare our geodesic trajectory with two baselines from Fig. \ref{fig:method_illustration}. It can be observed that our method consistently outperforms both alternatives in perceptual metrics (LPIPS, MUSIQ, NIQE) and fidelity (PSNR) across all five image restoration tasks. These results confirm that the proposed low-energy geodesic path significantly enhances restoration quality compared to sub-optimal traditional trajectories.}

\textbf{Effectiveness of the Consistency Solver.}
As shown in Table \ref{tab:ablation_solver_combined}, our consistency-based solver demonstrates overwhelming efficiency compared to the traditional iterative solver. For denoising, our E-Bridge achieves the best perceptual quality (MUSIQ/NIQE) with only 10 NFE, surpassing the iterative solver, which requires 50 NFE. For super-resolution, the advantage is even more pronounced: our E-Bridge not only secures the highest generative quality (MUSIQ) at 10 NFE but also achieves the best fidelity (PSNR) with just 5 NFE. Overall, our E-Bridge delivers superior or competitive performance at a fraction of the computational cost (5-10 NFE vs. 50 NFE), proving its fundamental efficiency across different tasks.

\textbf{Analysis of the Trajectory Horizon $T_0$.}
We investigate the role of the trajectory horizon $T_0$ as a task-adaptive control knob, with results presented in Table \ref{tab:ablation_t0}. This analysis validates our hypothesis that $T_0$ effectively balances the trade-off between information preservation and generative power, as outlined in Sec. \ref{sec:path_adaptivity}. For the denoising task, a moderate $T_0=0.7$ provides the best perceptual quality. Conversely, for the super-resolution task, a larger $T_0=0.9$ is required to unlock the model's generative potential, yielding the best perceptual scores (LPIPS and MUSIQ). \hl{For tasks such as raindrop removal and demoiréing, where the degraded images retain significant useful information, smaller values of $T_0$ are preferable.} This study confirms that $T_0$ is a critical control parameter that allows the restoration process to be precisely adapted to the demands of different degradations, balancing information preservation against generative power.

\begin{table*}[t]
\centering
\tiny 
\caption{\hl{Comprehensive efficiency comparison between our consistency-based solver (E-Bridge) and a traditional iterative ODE solver (DDIM). (P: PSNR$\uparrow$, L: LPIPS$\downarrow$, M: MUSIQ$\uparrow$, N: NIQE$\downarrow$). Best results in each column are in \textbf{bold}.}}
\vspace{-3mm}
\label{tab:ablation_solver_combined}
\resizebox{1.0\textwidth}{!}{%
\setlength{\tabcolsep}{1mm}{ 
\begin{tabular}{l|c|cccc|cccc|cccc|cccc|cccc}
\toprule
\multirow{3}{*}{\textbf{Method}} & \multirow{3}{*}{\textbf{NFE}} 
& \multicolumn{4}{c|}{Denoising} 
& \multicolumn{4}{c|}{Super-res.} 
& \multicolumn{4}{c|}{Raindrop rem.} 
& \multicolumn{4}{c|}{Low-light Enh.} 
& \multicolumn{4}{c}{Demoireing} \\
\cmidrule(lr){3-6} \cmidrule(lr){7-10} \cmidrule(lr){11-14} \cmidrule(lr){15-18} \cmidrule(lr){19-22}
& & P$\uparrow$ & L$\downarrow$ & M$\uparrow$ & N$\downarrow$ 
& P$\uparrow$ & L$\downarrow$ & M$\uparrow$ & N$\downarrow$ 
& P$\uparrow$ & L$\downarrow$ & M$\uparrow$ & N$\downarrow$ 
& P$\uparrow$ & L$\downarrow$ & M$\uparrow$ & N$\downarrow$ 
& P$\uparrow$ & L$\downarrow$ & M$\uparrow$ & N$\downarrow$ \\
\midrule
\multirow{4}{*}{DDIM} 
& 50 
& 26.31 & \textbf{0.176} & 69.01 & 3.55 
& 21.73 & \textbf{0.330} & 69.37 & \textbf{3.26} 
& 29.20 & 0.090 & 68.81 & \textbf{3.14} 
& 25.37 & 0.112 & 71.48 & \textbf{4.08} 
& 20.15 & \textbf{0.221} & 62.80 & \textbf{5.15} \\ 
& 20 
& 26.53 & 0.185 & 68.79 & 3.94 
& 21.99 & 0.353 & 68.85 & 3.86 
& 29.51 & 0.086 & 67.10 & 3.27 
& 25.45 & 0.112 & 70.83 & 4.19 
& 20.31 & 0.303 & 62.35 & 5.31 \\
& 10 
& \textbf{26.71} & 0.195 & 67.44 & 4.58 
& 22.27 & 0.362 & 67.17 & 5.02 
& 29.72 & 0.086 & 66.46 & 3.57 
& 25.57 & 0.111 & 70.61 & 4.32 
& 20.43 & 0.327 & 62.02 & 5.50 \\
& 1 
& 24.56 & 0.270 & 57.07 & 4.30 
& 17.92 & 0.591 & 46.07 & 8.97 
& 27.48 & 0.135 & 62.42 & 4.39 
& 20.96 & 0.177 & 63.78 & 4.27 
& 17.44 & 0.407 & 44.21 & 5.33 \\
\midrule
\multirow{3}{*}{{Ours}} 
& 10 
& 26.06 & 0.184 & \textbf{69.74} & \textbf{3.51} 
& 21.28 & 0.346 & \textbf{70.87} & 3.84 
& 29.22 & \textbf{0.079} & \textbf{70.01} & 3.22 
& 25.54 & \textbf{0.097} & \textbf{71.68} & 4.20 
& 20.26 & 0.313 & \textbf{63.93} & 5.22 \\
& 5 
& 26.65 & 0.182 & 67.51 & 4.49 
& \textbf{22.48} & 0.356 & 65.34 & 4.37 
& \textbf{29.61} & 0.080 & 67.38 & 3.45 
& \textbf{25.85} & 0.107 & 70.87 & 4.21 
& \textbf{20.85} & 0.230 & 62.16 & 5.44 \\
& 1 
& 25.24 & 0.258 & 58.95 & 4.54 
& 24.09 & 0.452 & 44.61 & 6.25 
& 29.22 & 0.085 & 67.30 & 3.32 
& 24.66 & 0.133 & 64.81 & 4.33 
& 20.72 & 0.254 & 56.79 & 5.34 \\
\bottomrule
\end{tabular}%
}}
\end{table*}

\begin{table}[t]
\centering
\tiny
\caption{\hl{Analysis of the trajectory horizon parameter $T_0$. (P: PSNR$\uparrow$, L: LPIPS$\downarrow$, M: MUSIQ$\uparrow$, N: NIQE$\downarrow$). Best results for each task are in \textbf{bold}.}}
\label{tab:ablation_t0}
\vspace{-3mm}

\resizebox{1.0\textwidth}{!}{%
\setlength{\tabcolsep}{1.25mm}{
\begin{tabular}{l|cccc|cccc|cccc|cccc}
\toprule
\multirow{2}{*}{Task} & \multicolumn{4}{c|}{$T_0 = 0.3$} & \multicolumn{4}{c|}{$T_0 = 0.5$} & \multicolumn{4}{c|}{$T_0 = 0.7$} & \multicolumn{4}{c}{$T_0 = 0.9$} \\
\cmidrule(lr){2-5} \cmidrule(lr){6-9} \cmidrule(lr){10-13} \cmidrule(lr){14-17}
 & P$\uparrow$ & L$\downarrow$ & M$\uparrow$ & N$\downarrow$ 
 & P$\uparrow$ & L$\downarrow$ & M$\uparrow$ & N$\downarrow$ 
 & P$\uparrow$ & L$\downarrow$ & M$\uparrow$ & N$\downarrow$ 
 & P$\uparrow$ & L$\downarrow$ & M$\uparrow$ & N$\downarrow$ \\
\midrule
Super-resolution 
 & 21.12 & 0.457 & 62.16 & 4.88 
 & 21.49 & 0.397 & 67.00 & 4.36 
 & \textbf{21.82} & 0.359 & 68.77 & 3.84 
 & 21.28 & \textbf{0.346} & \textbf{70.87} & \textbf{3.84} \\
Denoising 
 & 24.43 & 0.276 & 66.13 & 4.48 
 & 25.35 & 0.206 & 69.24 & 3.74 
 & 26.06 & \textbf{0.184} & \textbf{69.73} & \textbf{3.51} 
 & \textbf{26.53} & 0.185 & 69.13 & 4.04 \\
Low-light  
 & 24.89 & 0.129 & 68.55 & \textbf{3.91} 
 & 25.12 & 0.118 & 70.93 & 4.07 
 & 25.35 & 0.107 & 71.39 & 4.15 
 & \textbf{25.54} & \textbf{0.097} & \textbf{71.68} & 4.20 \\
Raindrop removal 
 & 29.61 & \textbf{0.080} & \textbf{67.38} & \textbf{3.45} 
 & \textbf{29.74} & 0.081 & 66.73 & 3.57 
 & 29.70 & 0.083 & 65.23 & 3.821 
 & 29.21 & 0.092 & 63.98 & 3.97 \\
Demoiréing 
 & 20.85 & 0.234 & 61.67 & \textbf{5.34} 
 & \textbf{20.85} & \textbf{0.230} & \textbf{62.16} & 5.44 
 & 19.76 & 0.249 & 61.32 & 5.60 
 & 16.47 & 0.328 & 59.04 & 5.68 \\
\bottomrule
\end{tabular}%
}}
\end{table}

\section{Conclusion}
We have presented the Energy-oriented diffusion Bridge (E-Bridge), a new framework that sets a benchmark for efficient, high-fidelity image restoration. By constructing a low-energy geodesic trajectory and deriving a single-step consistency solver, our method overcomes the core limitations of previous diffusion bridge models. The proposed E-Bridge framework avoids redundant re-noising phases and circumvents the need for slow iterative sampling, enabling high-quality restoration with fewer sampling steps. Extensive experiments demonstrate that our E-Bridge achieves state-of-the-art perceptual quality across various tasks.

\clearpage
\subsection*{ACKNOWLEDGMENTS}
This work was supported in part by the National Natural Science Foundation of China under Grant 62422118, and in part by the Hong Kong Research Grants Council under Grant 11218121 and Grant N\_CityU1114/25.

\bibliography{iclr2026_conference}
\bibliographystyle{iclr2026_conference}

\newpage
{
\appendix
\clearpage
\etocdepthtag.toc{mtappendix}
\etocsettagdepth{mtchapter}{none}
\etocsettagdepth{mtappendix}{subsection}
{
  \hypersetup{linkcolor=black}
  \tableofcontents
}

\let\oldthefigure\thefigure
\let\oldthetable\thetable

\renewcommand{\thefigure}{\thesection-\oldthefigure}
\renewcommand{\thetable}{\thesection-\oldthetable}
\clearpage

\section{\hl{Appendix Overview}}

\hl{This appendix is organized as follows. In Sec.~\ref{Sec:GeodesicDiff}, we present rigorous proofs and analyses of the proposed method. Specifically, Secs.~\ref{Sec:Geodesic_Linearity} and \ref{Sec:Noise_scheduler} demonstrate the parameterization process of the proposed SDEs with the energy-based criteria, which establishes both transportation and control energy superiority of the proposed method. Then, we also validate that the designed scheme is geodesic and energy-optimal in Secs.~\ref{Sec:Energy_validation} and \ref{Sec:Energy_Optimal}. Building on these energy-based criteria, we then validate the advantages of the proposed energy-oriented geodesic bridge over the Ornstein–Uhlenbeck Process in Sec.~\ref{Sec:OrnStein}. Because our method has the same trajectory expectation as the Ornstein–Uhlenbeck Process, the control energy is the same. We only analyze the transportation energy. Finally, Sec.~\ref{Sec:AnalysisSchrödinger Diffusion Bridge} provides further analysis of the Schrödinger diffusion bridge. }

\section{Energy-oriented Diffusion Bridge}
\label{Sec:GeodesicDiff}

\subsection{{Proof of Energy-derived Geometric Linearity}}
\label{Sec:Geodesic_Linearity}
Our proof formally derives the linear interpolation formula $\mathbb{E}[\mathbf{X}_t] = (1 - t/T_0)\mathbf{Y} + (t/T_0)\mathbf{X}$ by demonstrating that it is the unique solution to the geodesic energy minimization problem.

The proof proceeds by establishing that the geodesic $\gamma(t)$ on the data manifold corresponds to the path of constant velocity, and subsequently solving for the specific trajectory given the boundary conditions $\mathbf{Y}$ and $\mathbf{X}$.

\textbf{Variational Definition of the Geodesic}. Let $\mathcal{M}$ be the data manifold. The geodesic $\gamma(t)$ connecting the starting distribution $\mathbf{Y}$ (at $t=0$) and the target distribution $\mathbf{X}$ (at $t=T_0$) is defined as the curve that minimizes the kinetic energy functional:
\begin{equation}
    E[\gamma] = \frac{1}{2} \int_{0}^{T_0} ||\dot{\gamma}(t)||^2 dt
\end{equation}

\textbf{Energy Minimization via Jensen’s Inequality}. To find the optimal trajectory $\mu(t) = \mathbb{E}[\mathbf{X}_t]$ that coincides with this geodesic, we analyze the energy required to traverse the displacement. By applying Jensen’s Inequality to the convex function $f(x) = \|x\|^2$ over the interval $[0, T_0]$:
\begin{equation}
    \frac{1}{T_0} \int_0^{T_0} \|\dot{\mu}(t)\|^2 dt \ge \left\| \frac{1}{T_0} \int_0^{T_0} \dot{\mu}(t) dt \right\|^2.
\end{equation}
The term inside the norm on the RHS is the average velocity, which is determined solely by the endpoints:
\begin{equation}
    \frac{1}{T_0} \int_0^{T_0} \dot{\mu}(t) dt = \frac{\mu(T_0) - \mu(0)}{T_0} = \frac{\mathbf{Y} - \mathbf{X}_0}{T_0}.
\end{equation}
Thus, the lower bound for the energy is:
\begin{equation}
    E[\mu] \ge \frac{T_0}{2} \left\| \frac{\mathbf{Y} - \mathbf{X}_0}{T_0} \right\|^2 = \frac{\|\mathbf{Y} - \mathbf{X}_0\|^2}{2 T_0}.
\end{equation}
Equality holds if and only if the velocity $\dot{\mu}(t)$ is constant throughout $t \in [0, T_0]$. 

\textbf{Derivation of the Deterministic Trajectory}. Let this constant velocity be $v^*$. Since the velocity is constant, we have $v^* = \frac{\mathbf{Y} - \mathbf{X}_0}{T_0}$. The trajectory is obtained by integrating the velocity from the starting point:
\begin{equation}
    \mu(t) = \mu(0) + \int_0^t v^* d\tau = \mathbf{X}_0 + t \cdot \left( \frac{\mathbf{Y} - \mathbf{X}_0}{T_0} \right).
\end{equation}

Rearranging the terms yields the linear interpolation formula stated in the proposition:
\begin{equation}
    \mu(t) = \left(1 - \frac{t}{T_0}\right) \mathbf{X}_0 + \left(\frac{t}{T_0}\right) \mathbf{Y}.
\end{equation}

\textbf{Regularization via Finite Horizon $T_0$.}
As discussed in Appendix E, the introduction of the explicit horizon $T_0$ is critical for ensuring the stability of the transport. In standard Stochastic Interpolant frameworks (e.g., Brownian Bridge) defined over $t \in [0, 1]$, the velocity field typically scales as $v_t(x) \propto \frac{1}{1-t}$. As $t \to 1$, this leads to a singularity where $\|v_t\| \to \infty$, resulting in an unbounded Lipschitz constant and unstable training.

By defining the geodesic transport over a controllable horizon $[0, T_0]$, we ensure that the velocity $v^* = \frac{\mathbf{Y} - \mathbf{X}_0}{T_0}$ remains strictly finite and bounded (provided $T_0 > 0$). Consequently, the kinetic energy of the trajectory is finite:
\begin{equation}
    E_{min} = \frac{1}{2} \int_0^{T_0} \left\| \frac{\mathbf{Y} - \mathbf{X}_0}{T_0} \right\|^2 dt = \frac{1}{2} \frac{\|\mathbf{Y} - \mathbf{X}_0\|^2}{T_0} < \infty.
\end{equation}
Thus, the linear interpolation derived above represents the unique, energetically optimal, and numerically stable geodesic connecting $\mathbf{X}_0$ and $\mathbf{Y}$.

\subsection{{Proof of Minimal Adaptation Energy via Lipschitz Stability}}
\label{Sec:Noise_scheduler}

\textbf{Definition of the Adaptation Energy Functional}. We define the Adaptation Energy $\mathcal{E}$ as the expected $L^2$ cost incurred by the displacement between the target distribution of the pretrained model ($\mathbf{Z}_t$) and the input distribution of the new task ($\mathbf{X}_t$). This quantifies the "spectral gap" the fine-tuning process must bridge:
\begin{equation}
    \mathcal{E}(\alpha, \beta) := \int_{0}^{1} \mathbb{E} \left[ \| \epsilon_\theta(\mathbf{X}_t, t) - \epsilon_\theta(\mathbf{Z}_t, t) \|^2 \right] dt
\end{equation}

where: $\mathbf{Z}_t = (1-t)\mathbf{Z}_0 + t\mathbf{Z}_1$ is the canonical Rectified Flow trajectory. $\mathbf{X}_t = \alpha_t \tilde{\mathbf{X}}_t + \beta_t \mathbf{X}_T$ is the proposed bridge trajectory.We assume asymptotic alignment of domains: $\tilde{\mathbf{X}}_t \sim \mathbf{Z}_0$ (clean/structure) and $\mathbf{X}_T \sim \mathbf{Z}_1$ (noise) in distribution.

By the \textbf{Lipschitz continuity} of the neural network $\epsilon_\theta$ with respect to its input state $\mathbf{X}$, we have the inequality:
\begin{equation}
    \| \epsilon_\theta(x) - \epsilon_\theta(z) \|_2 \le K \| x - z \|_2.
\end{equation}

Substituting this into the energy functional yields a tractable upper bound:
\begin{equation}
    \mathcal{E}(\alpha, \beta) \le K^2 \int_{0}^{1} \mathbb{E} \left[ \| \mathbf{X}_t - \mathbf{Z}_t \|^2 \right] dt.
\end{equation}

Minimizing $\mathcal{E}$ requires minimizing this upper bound, denoted as $\mathcal{J}(\alpha, \beta) = \mathbb{E}[\| \mathbf{X}_t - \mathbf{Z}_t \|^2]$.

\textbf{Minimization of Trajectory Displacement.} We analyze the instantaneous displacement $\Delta_t = \mathbf{X}_t - \mathbf{Z}_t$ inside the expectation. Substituting the trajectory definitions:
\begin{equation}
    \Delta_t = (\alpha_t \tilde{\mathbf{X}}_t + \beta_t \mathbf{X}_T) - ((1-t)\tilde{\mathbf{X}}_t + t \mathbf{X}_T).
\end{equation}

Rearranging terms by basis vectors $\tilde{\mathbf{X}}_t$ and $\mathbf{X}_T$:
\begin{equation}
    \Delta_t = (\alpha_t - (1-t))\tilde{\mathbf{X}}_t + (\beta_t - t)\mathbf{X}_T.
\end{equation}

The squared norm is:
\begin{equation}
\begin{split}
        \| \Delta_t \|^2 = (\alpha_t - (1-t))^2 \|\tilde{\mathbf{X}}_t\|^2 + (\beta_t - t)^2 \|\mathbf{X}_T\|^2 + 2(\alpha_t - (1-t))(\beta_t - t) \langle \tilde{\mathbf{X}}_t, \mathbf{X}_T \rangle.
\end{split}
\end{equation}

Taking the expectation as follows:
\begin{equation}
\begin{cases}
    & \mathbb{E}[\cdot]:\mathbb{E}[\|\tilde{\mathbf{X}}_t\|^2] = C_1 > 0 ,  \quad \text{ (Finite energy of signal)};\\
    & \mathbb{E}[\|\mathbf{X}_T\|^2] = d , \quad \text{(Expected norm of $d$-dimensional Gaussian noise)}; \\
    & \mathbb{E}[\langle \tilde{\mathbf{X}}_t, \mathbf{X}_T \rangle] = 0 , \quad \text{ (Signal and terminal noise are uncorrelated)}; \\
\end{cases}
\end{equation}

Thus, the objective function simplifies to a sum of non-negative quadratic terms as
\begin{equation}
    \mathcal{J}_t(\alpha_t, \beta_t) = C_1 (\alpha_t - (1-t))^2 + d (\beta_t - t)^2.
\end{equation}

\textbf{Global Optimality}. Since $C_1 > 0$ and $d > 0$, the quadratic form $\mathcal{J}_t$ is strictly convex with respect to $\alpha_t$ and $\beta_t$. The global minimum of zero is achieved if and only if each quadratic term vanishes independently:
\begin{equation}
    \begin{cases}
        &(\alpha_t - (1-t))^2 = 0 \implies \alpha_t = 1-t; \\
        &(\beta_t - t)^2 = 0 \implies \beta_t = t.\\
    \end{cases}
\end{equation}

We can finally draw the following conclusion. Any choice where $\alpha_t \neq 1-t$ or $\beta_t \neq t$ yields $\mathcal{J}_t > 0$, strictly increasing the upper bound on the Adaptation Energy $\mathcal{E}$. Therefore, the linear schedule is the unique minimizer that preserves the pretrained signal-to-noise alignment.

\subsection{{Proof of Geometric Linearity (Geodesic Trajectory Expectation)}}
\label{Sec:Energy_validation}

\begin{proposition}[Validation of Trajectory Minimal Energy]
Let $\mathbf{X}_t$ be the stochastic process defined by the Energy-oriented Geodesic Bridge SDE $\mathbf{X}_t = (1 - t)\widetilde{\mathbf{X}}_t + t\mathbf{X}_T$ from Eq.~(\ref{eq:cgb_trajectory}). The expectation of this process, $\mu(t) = \mathbb{E}[\mathbf{X}_t]$, coincides exactly with the manifold geodesic $\gamma(t)$ for all $t \in [0, T_0]$.
\end{proposition}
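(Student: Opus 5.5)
The plan is to compute the mean of the bridge directly from its definition, and then match it against the geodesic characterized in Proposition~\ref{prop:energy_minimization}.

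\textbf{Step 1 (mean of the bridge).} Take expectations in $\mathbf{X}_t=(1-t)\widetilde{\mathbf{X}}_t+t\mathbf{X}_T$, conditioning on the pair $(\mathbf{X}_0,\mathbf{Y})$. Since $\mathbf{X}_T\sim\mathcal{N}(0,\mathbf{I})$ is drawn independently of the data pair (Algorithm~\ref{alg2:Distill}), we have $\mathbb{E}[\mathbf{X}_T]=0$. This independence was also used in Appendix~\ref{Sec:Noise_scheduler}. By linearity,
\begin{equation*}
\mu(t)=(1-t)\,\mathbb{E}[\widetilde{\mathbf{X}}_t]=(1-t)\Big[\big(1-\tfrac{t}{T_0}\big)\mathbf{X}_0+\tfrac{t}{T_0}\mathbf{Y}\Big].
\end{equation*}

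\textbf{Step 2 (the geodesic factor).} By the Jensen argument of Appendix~\ref{Sec:Geodesic_Linearity}, the bracketed curve is the unique constant-velocity, kinetic-energy-minimizing path from $\mathbf{X}_0$ to $\mathbf{Y}$. It is therefore $\gamma(t)$, and I would record its velocity $\dot\gamma=(\mathbf{Y}-\mathbf{X}_0)/T_0$.

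\textbf{Step 3 (from $(1-t)\gamma(t)$ to $\gamma(t)$).} This is the main obstacle. Taken literally in the ambient coordinates, Step 1 gives $\mu(t)=(1-t)\gamma(t)$. This agrees with $\gamma(t)$ only at $t=0$, and for $t>0$ it would additionally require $\gamma(t)=0$. So the claimed coincidence cannot be a pointwise identity of ambient vectors, and the proof must make the intended identification explicit.

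The identification I would try first uses the signal-to-noise convention fixed by Proposition~\ref{prop:Adaptation_LPConti}. The pretrained rectified-flow network reads a state at time $t$ through its signal coefficient $\alpha_t=1-t$. The point of the data manifold represented by $\mathbf{X}_t$ is therefore the signal-normalized state $\mathbf{X}_t/\alpha_t$, whose deterministic part is $\mu(t)/(1-t)$. This is well defined on $[0,T_0]$ because $T_0\le 1$; it needs $T_0<1$, or a limit argument at $t=T_0=1$. Under this identification, Step 1 shows $\mu(t)/\alpha_t=\gamma(t)$ exactly for all $t\in[0,T_0]$, with the correct endpoints $\mathbf{X}_0$ at $t=0$ and $\mathbf{Y}$ at $t=T_0$.

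To conclude, I would check that the normalized mean has constant velocity $(\mathbf{Y}-\mathbf{X}_0)/T_0$. It therefore attains the lower bound $\|\mathbf{Y}-\mathbf{X}_0\|^2/(2T_0)$ from Appendix~\ref{Sec:Geodesic_Linearity}, and by uniqueness of the minimizer it coincides with $\gamma$.

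If the statement is instead meant literally, with $\mu(t)$ compared to $\gamma(t)$ as raw ambient vectors for every $t$, I do not see how to prove it. Step 1 then shows that the two agree only at $t=0$, so the honest conclusion is that coincidence holds under the signal-normalized reading above and not otherwise.
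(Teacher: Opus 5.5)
Your Step 1 is correct, and it settles the matter. With $\mathbf{X}_T\sim\mathcal{N}(0,\mathbf{I})$ independent of $(\mathbf{X}_0,\mathbf{Y})$ (the same assumption the paper uses when proving Proposition~\ref{prop:Adaptation_LPConti}), you get $\mu(t)=(1-t)\gamma(t)$. The literal claim therefore fails wherever $t\gamma(t)\neq 0$. Your normalized version, i.e.\ $\mathbb{E}[\widetilde{\mathbf{X}}_t]=\gamma(t)$ as in Proposition~\ref{prop:energy_minimization}, is the exact statement that survives.

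The paper takes an entirely different route and never computes this mean. It replaces the interpolant by a purpose-built controlled SDE $d\mathbf{X}_t=u(\mathbf{X}_t,t)\,dt+\sigma\,d\mathbf{W}_t$ with error-feedback drift $u=\dot\gamma(t)+\alpha\log_{\mu(t)}(\gamma(t))$. It then linearizes $\log_{\mu}(\gamma)\approx\gamma-\mu$, derives $\dot\delta=-\alpha\delta$ for $\delta=\mu-\gamma$, and concludes $\delta\equiv 0$ from $\delta=0$ at initialization, justified by $\mathbb{E}[\mathbf{X}_T]=\mathbf{Y}$.

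What the paper's argument buys is a mechanism. For any process whose drift has this feedback form, the geodesic is an invariant, exponentially attracting curve for the mean. In Euclidean coordinates, where $\log_x y=y-x$ is affine, that mean equation is exact. But the coincidence is built into the drift through $\dot\gamma$, and the argument never reaches the process in the statement:
\begin{itemize}
\item the initialization $\mathbb{E}[\mathbf{X}_T]=\mathbf{Y}$ contradicts the Gaussian sampling of $\mathbf{X}_T$;
\item even granting it, the interpolant's mean would be $\gamma(t)+t\,(1-t/T_0)(\mathbf{Y}-\mathbf{X}_0)$, which is still off the geodesic on $(0,T_0)$ unless $\mathbf{Y}=\mathbf{X}_0$.
\end{itemize}

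Your route is a one-line exact computation tied to the actual formula $(1-t)\widetilde{\mathbf{X}}_t+t\mathbf{X}_T$. It gives an honest, exactly true statement, at the cost of that statement being essentially tautological. Keep your conclusion as written: the literal identity $\mu(t)=\gamma(t)$ for this process is established neither by your argument nor by the paper's, because it is false.
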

\begin{proof}

\end{proof}

\textbf{Variational Definition of the Geodesic}. Let $\mathcal{M}$ be the data manifold equipped with a Riemannian metric $g$. The geodesic $\gamma(t)$ connecting the clean distribution $\pi_0$ and the degraded distribution $\pi_{T_0}$ is the curve that minimizes the energy functional:
\begin{equation}
    E[\gamma] = \frac{1}{2} \int_{0}^{T_0} \|\dot{\gamma}(t)\|^2 dt.
\end{equation}

The critical point of this functional satisfies the Euler-Lagrange equation, known as the geodesic equation~\cite{kriz2013calculus}:
\begin{equation}
\nabla_{\dot{\gamma}}\dot{\gamma} = 0 \implies \ddot{\gamma}(t) + \Gamma(\gamma)(\dot{\gamma}, \dot{\gamma}) = 0.
\end{equation}

\textbf{Construction of the Stochastic Process}. We construct the stochastic process $\mathbf{X}_t$ governing the bridge via the following Stochastic Differential Equation (SDE)~\cite{song2020score}:
\begin{equation}
\label{Eq:SDESong}
d \mathbf{X}_t = u(\mathbf{X}_t, t)dt + \sigma d\mathbf{W}_t,
\end{equation}
where the total drift $u(\mathbf{X}_t, t)$ is designed. The design of $u(\mathbf{X}_t, t)$ should enable the diffusion process trace the manifold geodesic trajectory. Thus, we introduce an \textbf{error feed-back control dynamics}. It starts with defining the error in the Riemannian manifold of  
\begin{equation}
\label{Eq:form_E}
    \mathbf{e}(t) = \log_{\mu(t)}(\gamma(t)).
\end{equation}
where $\log_{\mathbf{X}_t}(\gamma(t))$ is the Riemannian logarithm map, representing the tangent vector at $\mathbf{X}_t$ pointing towards the geodesic $\gamma(t)$.  To make the drift traces the geodesic trajectory, such an error term should gradually decay. We thus design its gradient as
\begin{equation}
\label{Eq:form_E_Deriv}
    \dot{\mathbf{e}}(t) = - \alpha \mathbf{e}(t),
\end{equation}
where $\alpha > 0$ denotes a scalar factor for error feedback. By substituting the Eq.~(\ref{Eq:form_E}) in to the aforementioned equations, we have 
\begin{equation}
\label{Eq:SDESong}
    \dot{\mu}(t) = \dot{\gamma}(t) + \alpha\log_{\mu(t)}(\gamma(t)).
\end{equation}

Note that we have the following expectation drifting process as $\frac{d}{dt} \mathbb{E} [\mathbf{X}_t] = \mathbb{E}(u(\mathbf{X}_t, t))$ for Eq.~(\ref{Eq:SDESong}). To ensure the requirement of Eq. (\ref{Eq:SDESong}), we can parameterize $u(\mathbf{X}_t, t)$ as 
\begin{equation}
    u(\mathbf{X},t) = \dot{\gamma}(t) + \alpha \log_{\mu(t)}(\gamma(t)).
\end{equation}

\textbf{Derivation of the Mean Evolution ODE}.
We examine the time evolution of the first moment (mean), $\mu(t) = \mathbb{E}[\mathbf{X}_t]$. Taking the expectation of the SDE integral form (noting $\mathbb{E}[dW_t] = 0$), we have
\begin{equation}
    \frac{d\mu(t)}{dt} = \mathbb{E}[u(\mathbf{X}_t, t)] = \mathbb{E}[b(\mathbf{X}_t)] + \alpha \mathbb{E}[\log_{\mathbf{X}_t}(\gamma(t))].
\end{equation}

\textbf{Linearization via Taylor Expansion}.
Assuming the distribution of $\mathbf{X}_t$ is concentrated around its mean $\mu(t)$, we linearize the Riemannian logarithm map around $\mu(t)$:
\begin{equation}
    \mathbb{E}[\log_{\mathbf{X}_t}(\gamma(t))] \approx \log_{\mu(t)}(\gamma(t)).
\end{equation}

In a local coordinate chart where the manifold is approximately Euclidean (due to the noise), or assuming the ambient space is Euclidean (as is common in diffusion models), the logarithm map simplifies to the vector difference:
\begin{equation}
    \log_{\mu(t)}(\gamma(t)) \approx \gamma(t) - \mu(t).
\end{equation}

Furthermore, the transport velocity of the mean must account for the target's motion. The effective drift driving the mean tracking error dynamics is relative to the geodesic velocity $\dot{\gamma}(t)$. Thus, the evolution equation becomes
\begin{equation}
    \frac{d\mu(t)}{dt} = \dot{\gamma}(t) + \alpha \left[\gamma(t) - \mu(t)\right].
\end{equation}

\textbf{Analysis of Error Dynamics}.
Define the deviation from the geodesic as $\delta(t) = \mu(t) - \gamma(t)$. Differentiating with respect to time:
\begin{equation}
    \dot{\delta}(t) = \dot{\mu}(t) - \dot{\gamma}(t),
\end{equation}
Substituting the mean evolution equation derived in Linearization via Taylor Expansion:
\begin{equation}
    \dot{\delta}(t) = \left[ \dot{\gamma}(t) + \alpha(\gamma(t) - \mu(t)) \right] - \dot{\gamma}(t),
\end{equation}
\begin{equation}
    \dot{\delta}(t) = -\alpha (\mu(t) - \gamma(t)),
\end{equation}
\begin{equation}
    \dot{\delta}(t) = -\alpha \delta(t).
\end{equation}

This is a homogeneous first-order linear ordinary differential equation. The general solution is
\begin{equation}
    \delta(t) = \delta(\epsilon) e^{-\alpha t}.
\end{equation}
The Energy-oriented Geodesic bridge is initialized exactly on the geodesic at $t=T$ for $\mathbb{E}(\mathbf{X}_T) = \mathbf{Y}$, 
\begin{equation}
    \lim_{\epsilon \rightarrow T} \delta(\epsilon) = 0.
\end{equation}
Therefore, the initial deviation is $\delta(0) = 0$. Then, we have
\begin{equation}
    \delta(t) = 0 \cdot e^{-\alpha t} = 0 \quad \forall t, 
\end{equation}
\begin{equation}
    \mu(t) = \gamma(t).
\end{equation}

Thus, the mean trajectory of the process is identical to the geodesic $\gamma(t)$ by construction.

\subsection{{Proof of Energetic Optimality (Zero Control Energy)}}
\label{Sec:Energy_Optimal}
\begin{proposition}
    \label{Prop:Control}
    (\textbf{Energetic Optimality of the Geodesic Bridge}) Let $\mathcal{P}_{E-Bridge}$ be the probability measure of the Energy-oriented diffusion Bridge (E-Bridge) 
    process $\mathbf{X}_t$ defined on the interval $[0, T_0]$. The kinetic energy cost $E_{K}[\mu]$ required to steer the mean trajectory $\mu(t) = \mathbb{E}[\mathbf{X}_t]$ along the manifold geodesic $\gamma(t)$ is identically zero.
\end{proposition}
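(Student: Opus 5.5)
The statement as written calls the quantity a "kinetic energy cost", but since the preceding Proposition on minimal kinetic energy shows the transport term (A) of Eq.~(\ref{eq:soc_energy}) is strictly positive, namely $\|\mathbf{Y}-\mathbf{X}_0\|^2/(2T_0)$, I will interpret the claim as concerning the \emph{control} part, term (B). That is, I will show that the energy needed to steer the mean away from its free drift and onto $\gamma$ vanishes:
\begin{equation*}
E_K[\mu] := \frac{1}{2}\int_0^{T_0} \big\|\dot{\mu}(t) - \mathbb{E}[b_{total}(t,\mathbf{X}_t)]\big\|^2\, dt = 0 .
\end{equation*}
The first step is to fix $b_{total}$ as the effective drift of the E-Bridge SDE, $d\mathbf{X}_t = u(\mathbf{X}_t,t)\,dt + \sigma\, d\mathbf{W}_t$, constructed in Sec.~\ref{Sec:Energy_validation}.

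The second step is to compute the mean dynamics from that SDE. Taking expectations and using $\mathbb{E}[d\mathbf{W}_t]=0$ gives $\dot{\mu}(t) = \mathbb{E}[u(\mathbf{X}_t,t)]$. For the proof I will take $u(\mathbf{X},t) = \dot{\gamma}(t) + \alpha(\gamma(t) - \mathbf{X})$. This choice makes $u$ affine in $\mathbf{X}$, so the expectation passes through exactly and no Taylor approximation is needed. It follows that $\mathbb{E}[b_{total}] = \dot{\gamma}(t) + \alpha(\gamma(t)-\mu(t))$.

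The third step invokes the validation proposition of Sec.~\ref{Sec:Energy_validation}, which gives $\mu(t)=\gamma(t)$ on $[0,T_0]$. Substituting, $\mathbb{E}[b_{total}] = \dot{\gamma}(t) = \dot{\mu}(t)$, so the integrand of $E_K$ vanishes pointwise and $E_K[\mu]=0$. To make the conclusion concrete, Proposition~\ref{prop:energy_minimization} identifies $\gamma$ with the linear interpolation, so $\dot{\gamma}$ is the constant vector $(\mathbf{Y}-\mathbf{X}_0)/T_0$. Equivalently, the error $\delta = \mu - \gamma$ satisfies $\dot\delta = -\alpha\delta$ with $\delta = 0$ at the endpoint where the bridge is pinned, $\mathbb{E}[\mathbf{X}_{T_0}]$, so $\delta \equiv 0$. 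The feedback term $\alpha(\gamma-\mu)$ therefore never activates, and the feedback controller does no work.

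The step I expect to be the main obstacle is justifying that the mean actually sits on $\gamma$ at the boundary, and reconciling this with the actual sampling construction $\mathbf{X}_t = (1-t)\widetilde{\mathbf{X}}_t + t\mathbf{X}_T$. That construction has mean $(1-t)\big[(1-t/T_0)\mathbf{X}_0 + (t/T_0)\mathbf{Y}\big]$, which is not exactly linear in $t$. I would first handle this by taking $\gamma$ to be the mean path of $\widetilde{\mathbf{X}}_t$, i.e.\ the geodesic in Eq.~(\ref{eq:cgb_trajectory}). The noise term $t\mathbf{X}_T$ is zero-mean, so it is absorbed into the diffusion part and does not enter term (B). The remaining rescaling by $(1-t)$ would then be treated as a deterministic reparametrization that is accounted for inside $b_{total}$ and therefore also contributes no control cost.
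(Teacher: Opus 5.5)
Your steps one to three are the paper's proof. The paper also takes $E_K$ to be the Onsager--Machlup control term (B). It uses the feedback drift $b + \alpha\log_{\mathbf{X}_t}(\gamma(t)) + \mathcal{T}_{\gamma(t)\to\mathbf{X}_t}(\dot{\gamma}(t))$, with base drift $b$ assumed zero (or symmetric about the geodesic). It then invokes $\mu=\gamma$, so the feedback term vanishes and $\mathbb{E}[b_{total}] = \dot{\gamma} = \dot{\mu}$, and concludes the integrand is zero. Your Euclidean affine drift $\dot{\gamma} + \alpha(\gamma - \mathbf{X})$ lets the expectation pass through exactly, rather than via the paper's linearization of the logarithm map. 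That is a small gain in rigor.

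Your final paragraph, however, contains a false step. The mean of the sampled process $(1-t)\widetilde{\mathbf{X}}_t + t\mathbf{X}_T$ is $(1-t)\gamma(t)$. Its coefficients on $\mathbf{X}_0$ and $\mathbf{Y}$ sum to $1-t$, so for linearly independent $\mathbf{X}_0,\mathbf{Y}$ it leaves the line through them for every $t>0$. It also ends at $(1-T_0)\mathbf{Y}$ instead of $\mathbf{Y}$. A time change $\gamma(s(t))$ can do neither, so the factor $(1-t)$ is not a reparametrization. For this process $\mu \neq \gamma$ (e.g.\ $\mu(T_0)-\gamma(T_0) = -T_0\mathbf{Y}$), so step three does not apply to it. You therefore cannot conclude that it tracks the geodesic with an idle feedback term.

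The paper never confronts the $(1-t)$ factor. Its argument lives entirely on the idealized feedback SDE of Sec.~\ref{Sec:Energy_validation}, with $\mathbb{E}[\mathbf{X}_T]=\mathbf{Y}$ simply asserted. Your steps one to three already match that, so drop the reparametrization claim rather than rely on it.

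Note also that $E_K=0$ never needed $\mu=\gamma$. Your step two already gives $\dot{\mu} = \mathbb{E}[u(\mathbf{X}_t,t)] = \mathbb{E}[b_{total}]$, which kills the integrand for any diffusion whose full drift is $b_{total}$. What $\mu=\gamma$ adds, in your proof and the paper's, is only the identification of this common value with $\dot{\gamma}$. That identification holds for the idealized SDE, not for the literal sampling construction.
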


\textbf{Definition of Control Energy (Onsager-Machlup Functional)}. From the perspective of Stochastic Optimal Control (SOC), the energy cost of a diffusion process is quantified by the kinetic term of the Onsager-Machlup action functional~\cite{durr1978onsager,raja2025action}. This function measures the deviation of the process's mean velocity from its expected instantaneous drift.

For a process with mean path $\mu(t)$, the trajectory energy is defined as:
\begin{equation}
    E_{K}[\mu] = \frac{1}{2}\int_{0}^{T_0} \left\| \dot{\mu}(t) - \mathbb{E}_{\mathbf{X}_t \sim p_t}[b_{\text{total}}(t, \mathbf{X}_t)] \right\|_{\mu(t)}^{2} dt,
\end{equation}
where $\dot{\mu}(t)$ is the actual velocity of the mean trajectory; $b_{\text{total}}(t, \mathbf{X}_t)$ is the total drift vector field governing the Stochastic Differential Equation (SDE); and $\|\cdot\|_{\mu(t)}$ is the Riemannian metric norm at the point $\mu(t)$.

\textbf{Dynamics of the Geodesic Bridge}. The geodesic bridge process is governed by the following SDE on the manifold:
\begin{equation}
    d\mathbf{X}_t = \underbrace{\left( b(\mathbf{X}_t) + b_{geo}(t, \mathbf{X}_t) \right)}_{b_{\text{total}}(t, \mathbf{X}_t)} dt + \sigma(\mathbf{X}_t) d\mathbf{W}_t.
\end{equation}
The specific control drift $b_{geo}$ is constructed to enforce tracing geodesic trajectories:
\begin{equation}
    b_{geo}(t, \mathbf{X}_t) = \alpha \cdot \log_{\mathbf{X}_t}(\gamma(t)) + \mathcal{T}_{\gamma(t) \to \mathbf{X}_t}(\dot{\gamma}(t)).
\end{equation}
The second term is the parallel transport of the geodesic velocity, often implicit in the Euclidean formulation.

\textbf{Evolution of the Expected Drift}. We calculate the expected total drift at time $t$. Substituting the definition of $b_{\text{total}}$:
\begin{equation}
    \mathbb{E}[b_{\text{total}}(t, \mathbf{X}_t)] = \mathbb{E}[b(\mathbf{X}_t)] + \alpha \mathbb{E}[\log_{\mathbf{X}_t}(\gamma(t))] + \mathbb{E}[\mathcal{T}_{\gamma(t) \to \mathbf{X}_t}(\dot{\gamma}(t))].
\end{equation}

By \textbf{proposition}~\ref{prop:energy_minimization} (Geodesic Consistency), we have proven that the trajectory expectation coincides with the geodesic, i.e., $\mu(t) = \gamma(t)$. At the mean $\mu(t) = \gamma(t)$, the logarithmic map (error term) vanishes: $\mathbb{E}[\log_{\mathbf{X}_t}(\gamma(t))] \approx \log_{\gamma(t)}(\gamma(t)) = 0$. The transport term at the mean becomes the geodesic velocity itself: $\dot{\gamma}(t)$. Assuming the base drift $b(\mathbf{X}_t)$ is zero (standard Brownian motion) or symmetric around the geodesic, it does not contribute to the mean transport velocity.

Thus, the expected total drift is exactly the geodesic velocity:
\begin{equation}
    \mathbb{E}[b_{\text{total}}(t, \mathbf{X}_t)] = \dot{\gamma}(t),
\end{equation}

\textbf{Evaluation of the Energy Functional}. We substitute the actual velocity of the mean ($\dot{\mu}(t) = \dot{\gamma}(t)$) and the expected drift derived above into the energy functional equation:
\begin{equation}
    E_{K}[\mu] = \frac{1}{2}\int_{0}^{T_0} \left\| \dot{\gamma}(t) - \underbrace{\mathbb{E}[b_{\text{total}}(t, \mathbf{X}_t)]}_{\dot{\gamma}(t)} \right\|_{\gamma(t)}^{2} dt,
\end{equation}

\begin{equation}
    E_{K}[\mu] = \frac{1}{2}\int_{0}^{T_0} \left\| \dot{\gamma}(t) - \dot{\gamma}(t) \right\|^{2} dt,
\end{equation}

\begin{equation}
    E_{K}[\mu] = \frac{1}{2}\int_{0}^{T_0} 0 \cdot dt = 0.
\end{equation}

Then, we can conclude that 
\begin{equation}
    E_{K}[\mu_{E-Bridge}] = 0.
\end{equation}
This formally proves that the proposed E-Bridge 
requires \textbf{zero net control energy} to steer the mean of the distribution along the restoration path.

In contrast, for a standard bridge (e.g., Doob's h-transform) constrained to hit a fixed endpoint $\mathbf{Y}$, the drift is $b_{add} = \nabla \log h(t, x)$. The expected drift $\mathbb{E}[b_{add}]$ generally does not equal the velocity of the mean path $\dot{\mu}(t)$ except in the trivial case of zero curvature and no constraints. The "forcing" required to satisfy the boundary condition $\mathbf{X}_1 = \mathbf{Y}$ creates a non-zero residual $\|\dot{\mu} - \mathbb{E}[b]\| > 0$, resulting in strictly positive energy $E_{K} > 0$. We prove it in the next section.

\section{{GCB V.S. Ornstein–Uhlenbeck Process: Optimality Under Constraint}}
\label{Sec:OrnStein}

\textbf{Expectation Integral}. We compare the two processes based on the objective:  Move the distribution mean $\mu(t)$ from the clean image $\mathbf{X}_0$ to the degraded image $\mathbf{Y}$ over time $t \in [0, 1]$. 


\textbf{Proposed E-Bridge (Geodesic)}: the mean trajectory is a \textbf{Linear Interpolation} (constant velocity). We can simplify the trajectory of the expectation of the state as
    \begin{equation}
        \mu_{E-Bridge}(t) = (1-t)\mathbf{X}_0 + t\mathbf{Y}.
    \end{equation}
While the corresponding velocity can be calculated as
    \begin{equation}
         v_{E-Bridge}(t) = \mathbf{Y} - \mathbf{X}_0.
    \end{equation}
Note that the velocity is constant. Moreover, for the general OU Process (Mean-Reverting). The mean trajectory is an \textbf{Exponential Decay} (variable velocity) towards $\mathbf{Y}$.
\begin{equation}
    \mu_{OU}(t) = \mathbf{Y} + (\mathbf{X}_0 - \mathbf{Y})e^{-\theta t}.
\end{equation}
The velocity is 
\begin{equation}
    v_{OU}(t) = -\theta(\mathbf{X}_0 - \mathbf{Y})e^{-\theta t}.
\end{equation}

Then, transportation energy is the integral of squared velocity:

\begin{equation}
    E = \int_0^1 \|v(t)\|^2 dt
\end{equation}

Let's calculate the energy required to cover a specific displacement $D = \|\mu(1) - \mu(0)\|$.
By Jensen's Inequality~\cite{mcshane1937jensen} (applied to the convex function $f(x)=x^2$), for any path satisfying the displacement $D$:
\begin{equation}
    \underbrace{\int_0^1 \|v(t)\|^2 dt}_{\text{Mean Squared Speed}} \ge \underbrace{\left( \int_0^1 \|v(t)\| dt \right)^2}_{\text{Squared Mean Speed}} \ge \| D \|^2.
\end{equation}
\textbf{Equality holds if and only if the velocity $v(t)$ is constant.}

Due to the \textbf{E-Bridge} Has constant velocity. Thus, it achieves the \textbf{theoretical minimum energy} for the distance it covers.
    \begin{equation}
        E_{E-Bridge} = \|\mathbf{Y} - \mathbf{X}_0\|^2.
    \end{equation}

However, \textbf{General OU} has variable velocity (starts fast, slows down). Therefore, for the same displacement, its energy is strictly higher.
    \begin{equation}
        E_{OU} > E_{E-Bridge}. 
    \end{equation}

\section{Analysis of Schrödinger Diffusion Bridge}
\label{Sec:AnalysisSchrödinger Diffusion Bridge}
The further mathematical analysis of the Schrödinger Diffusion Bridge, framed as a continuation of the previous analysis of Doob's $h$-bridge, which proves its inherent drawbacks in terms of computational complexity and energetic inefficiency.

\textbf{Analysis of the Schrödinger Diffusion Bridge}

The Schrödinger Bridge (SB) problem offers a powerful and flexible framework for connecting two probability distributions, representing a significant conceptual advance over the standard Doob's h-transform. It is deeply connected to the theory of entropy-regularized optimal transport (EOT). The SB finds the stochastic process that transports an initial distribution $\pi_0$ to a final distribution $\pi_T$ while being "closest" in a probabilistic sense—specifically, by minimizing the Kullback-Leibler (KL) divergence—to a reference process, such as Brownian motion. While this provides an elegant theoretical solution, we will prove that this entropic optimality comes at the cost of significant computational complexity and, like the Doob's bridge, results in a trajectory that is not energetically optimal in the sense of the Onsager-Machlup action.

\textbf{Mathematical Formulation of the Schrödinger Bridge Problem}

Let $\Pi_{ref}$ be the trajectory measure of a reference diffusion process on a manifold $M$ (e.g., Brownian motion) over the time interval. The Schrödinger Bridge problem seeks to find an optimal trajectory measure $\Pi^\star$ that solves the following variational problem:
\begin{equation}
\Pi^\star = \arg \min_{\Pi} \left\{ \text{KL}(\Pi | \Pi_{ref}) \quad \text{s.t.} \quad \Pi_0 = \pi_0, \quad \Pi_T = \pi_T \right\}.
\end{equation}
Here, $\text{KL}(\Pi | \Pi_{ref})$ is the Kullback-Leibler divergence, and $\Pi_0$ and $\Pi_T$ are the marginal distributions of the trajectory measure $\Pi$ at times $t=0$ and $t=T$, respectively. This formulation seeks the "most likely" random evolution connecting the two distributions, given the reference dynamics.

The solution to this problem, $\Pi^\star$, is a Markovian diffusion process described by a pair of forward and backward stochastic differential equations (SDEs). For a reference process $d\mathbf{X}_t = \sigma d\mathbf{W}_t$, the optimal forward process $(\mathbf{X}_t)$ and backward process $(\mathbf{Y}_t)$ are given by a coupled system of SDEs:
\begin{equation}
    d\mathbf{X}_t = b_F(t, \mathbf{X}_t) dt + \sigma d\mathbf{W}_t, \quad \mathbf{X}_0 \sim \pi_0,
\end{equation}
\begin{equation}
    d\mathbf{Y}_t = b_B(t, \mathbf{Y}_t) dt + \sigma d\bar{\mathbf{W}}_t, \quad \mathbf{Y}_T \sim \pi_T.
\end{equation}
The drifts $b_F$ and $b_B$ are learned to ensure that the marginal distribution of $\mathbf{X}_t$ evolves to $\pi_T$ at time $T$, and the marginal of $\mathbf{Y}_t$ (evolving backward in time from $T$) matches $\pi_0$ at time $0$.

\textbf{Proof of Drawback 1: High Computational Cost via Iterative Proportional Fitting (IPF)}

The primary method for numerically solving the Schrödinger Bridge problem is the Iterative Proportional Fitting (IPF) procedure, a continuous-space analogue of the Sinkhorn algorithm. This iterative nature is the source of the method's significant computational burden.

The IPF algorithm generates a sequence of trajectory measures $(\Pi_n)_{n \in \mathbb{N}}$ that alternate between satisfying the initial and terminal marginal constraints. Starting with the reference measure $\Pi_0 = \Pi_{ref}$, the iterative steps are defined as:
\begin{itemize}
    \item \textbf{Backward Step}: Find $\Pi_{2n+1} = \arg \min_{\Pi} \{ \text{KL}(\Pi | \Pi_{2n}) \text{ s.t. } \Pi_T = \pi_T \}$. This step adjusts the process to match the target distribution $\pi_T$.
    \item \textbf{Forward Step}: Find $\Pi_{2n+2} = \arg \min_{\Pi} \{ \text{KL}(\Pi | \Pi_{2n+1}) \text{ s.t. } \Pi_0 = \pi_0 \}$. This step adjusts the new process to match the initial distribution $\pi_0$.
\end{itemize}

In modern implementations like Diffusion Schrödinger Bridge (DSB), each of these steps requires training a neural network to approximate the drift of the corresponding time-reversed SDE This leads to several computational drawbacks:
\begin{itemize}
    \item \textbf{Dual Training}: Unlike standard diffusion models that only learn the backward (generative) process, the SB framework requires training both a forward and a backward process drift function at each iteration of the IPF algorithm. This immediately doubles the training load per iteration;
    \item \textbf{Iterative Refinement}: The IPF procedure is iterative. The entire process of simulating trajectories and training both forward and backward drift networks must be repeated multiple times until the marginals converge. This multiplicative effect makes the total computational cost much higher than that of a single-pass diffusion model; and
    \item \textbf{Error Accumulation}: Numerical methods for approximating SBs, particularly those based on IPF, are known to be susceptible to the accumulation of errors across iterations, which can affect the stability and accuracy of the final solution.
\end{itemize}

Therefore, while the SB framework is more flexible than a simple noising process, this flexibility is achieved at a substantially higher computational cost, which is a significant drawback for practical applications.

\textbf{Proof of Drawback 2: Energetic Sub-optimality}

The Schrödinger Bridge is optimal in an entropic sense (minimizing KL divergence), but this does not imply optimality in an energetic sense (minimizing the Onsager-Machlup kinetic action). The trajectory taken by the mean of the SB process is an "entropic interpolation" between the two distributions, not a geometric one.

The objective of the SB is to find the most probable random evolution, not the most efficient deterministic path. The resulting process is a complex balance between the underlying diffusive tendency of the reference process and the control drifts required to meet the boundary conditions. The two variational problems, minimizing kinetic action (as in the Energy-oriented Geodesic Bridge) and minimizing KL divergence (as in the SB), are fundamentally different. Their relationship is characterized by a Fisher information functional, which quantifies the difference between the two objectives.

Let $\mu_{SB}(t)$ be the trajectory expectation of the Schrödinger Bridge process. The total drift of the forward process is $b_F(t, z)$. The trajectory energy of the mean is given by:
\begin{equation}
    E_K = \frac{1}{2} \int_0^T ||\dot{\mu}_{SB}(t) - E[b_{ref}(\mathbf{X}_s)]_{s=t}||^2_{\mu_{SB}(t)} dt,
\end{equation}
where $b_{ref}$ is the drift of the reference process (e.g., zero for Brownian motion in Euclidean space, or the geometric drift on a manifold). Since $\dot{\mu}_{SB}(t) = E[b_F(t, \mathbf{X}_s)]_{s=t}$, the energy is:
\begin{equation}
    E_K \approx \frac{1}{2} \int_0^T ||E[b_F(t, \mathbf{X}_s) - b_{ref}(\mathbf{X}_s)]_{s=t}||^2_{\mu_{SB}(t)} dt.
\end{equation}
The control drift, $b_F - b_{ref}$, is learned through the iterative IPF procedure to satisfy the terminal condition $\Pi_T = \pi_T$. This drift is generally non-zero and is not designed to follow a geodesic. It steers the mean along a trajectory dictated by entropic considerations. Consequently, the trajectory energy $E_K$ is manifestly greater than zero.

While the SB is more structured than the Doob's h-transform bridge (as it learns both forward and backward processes), it does not achieve the zero-energy mean transport of the Energy-oriented Geodesic Bridge. The control energy is expended to steer the mean along a trajectory that is entropically optimal but geometrically and energetically suboptimal. This proves that the Schrödinger Bridge, despite its theoretical elegance, still represents a high-cost solution compared to a geometrically-derived bridge.

\section{{Analysis of E-Bridge within the Stochastic Interpolant Framework}}

\textbf{General Stochastic Interpolant Framework}

The general Stochastic Interpolant (SI) framework \cite{albergo2023stochastic} defines a time-dependent process $\mathbf{X}_t$ that interpolates between two distributions $\rho_0$ and $\rho_1$ using a stochastic map. The interpolant is generally defined as:
\begin{equation}
    \mathbf{X}_t = \alpha(t) \mathbf{X}_0 + \beta(t) \mathbf{X}_1 + \gamma(t) \mathbf{Z}, \quad t \in [0, 1],
\end{equation}
where $\mathbf{X}_0 \sim \rho_0$ represents source distribution, $\mathbf{X}_1 \sim \rho_1$ is the target distribution. $\mathbf{Z} \sim \mathcal{N}(0, I)$ is an auxiliary Gaussian variable. $\alpha(t), \beta(t)$ are smooth interpolation coefficients satisfying $\alpha(0)=1, \alpha(1)=0$ and $\beta(0)=0, \beta(1)=1$. $\gamma(t)$ controls the noise level, typically with $\gamma(0)=\gamma(1)=0$ for bridge processes.

The objective of SI methods is to learn a velocity field $v_t(x)$ (and potentially a score field $s_t(x)$) that generates the probability flow ODE:
\begin{equation}
    \frac{d\mathbf{X}_t}{dt} = v_t(\mathbf{X}_t).
\end{equation}
The optimal velocity field $v_t^*(x)$ is defined as the conditional expectation of the time derivative of the interpolant:
\begin{equation}
    v_t^*(x) = \mathbb{E}[\dot{\mathbf{X}}_t | \mathbf{X}_t = x].
\end{equation}

\textbf{Brownian Bridge as a Specific Instance of SI}

The standard Brownian Bridge (BB) is a specific instance of the SI framework. By selecting the following coefficients:
\begin{equation}
    \alpha(t) = 1-t, \quad \beta(t) = t, \quad \gamma(t) = \sigma \sqrt{t(1-t)}
\end{equation}
The interpolant becomes:
\begin{equation}
    \mathbf{X}_t^{BB} = (1-t)\mathbf{X}_0 + t\mathbf{X}_1 + \sigma \sqrt{t(1-t)} \mathbf{Z}.
\end{equation}
Taking the time derivative $\dot{\mathbf{X}}_t^{BB}$:
\begin{equation}
    \dot{\mathbf{X}}_t^{BB} = (\mathbf{X}_1 - \mathbf{X}_0) + \frac{\sigma(1-2t)}{2\sqrt{t(1-t)}} \mathbf{Z}.
\end{equation}
To find the velocity field $v_t^{BB}(x)$, we express $\mathbf{Z}$ in terms of $\mathbf{X}_t$ using the interpolant equation:
\begin{equation}
    \mathbf{Z} = \frac{\mathbf{X}_t - (1-t)\mathbf{X}_0 - t\mathbf{X}_1}{\sigma \sqrt{t(1-t)}}.
\end{equation}
Substituting $\mathbf{Z}$ into $\dot{\mathbf{X}}_t^{BB}$ and taking the expectation $\mathbb{E}[\cdot | \mathbf{X}_t=x]$, we obtain the vector field for the standard Brownian Bridge:
\begin{equation}
    v_t^{BB}(x) = \frac{\mathbb{E}[\mathbf{X}_1|\mathbf{X}_t=x] - x}{1-t} + \underbrace{\frac{x - \mathbb{E}[\mathbf{X}_0|\mathbf{X}_t=x]}{t}}_{\text{if } \rho_1 \text{ is noise, this term varies}}.
\end{equation}
Specifically, focusing on the bridge structure connecting data to data (or data to noise), the dominant term governing the dynamics as $t \to 1$ is:
\begin{equation}
    v_t^{BB}(x) \approx \frac{\mathbf{X}_1 - x}{1-t}.
\end{equation}

\textbf{Theoretical Limitations of Standard SI (Brownian Bridge)}

Although mathematically elegant, the standard BB formulation within SI suffers from two critical issues at the boundary $t=1$:

\paragraph{A. Singularity of the Vector Field}
As $t \to 1$, the denominator $(1-t) \to 0$. Unless the numerator $(\mathbf{X}_1 - x)$ converges to 0 faster than the denominator (which is not guaranteed during training, especially with neural network approximation errors), the velocity field diverges:
\begin{equation}
    \lim_{t \to 1} \| v_t^{BB}(x) \| \to \infty.
\end{equation}
This singularity leads to: \textbf{1) Unstable Training:} The regression target for the neural network becomes unbounded. \textbf{2) Numerical Stiffness:} The ODE solver requires extremely small steps near $t=1$, increasing inference cost.

\paragraph{B. Unbounded Lipschitz Constant}
The difficulty of learning the flow is characterized by the Lipschitz constant $L_t$ of the velocity field $v_t(x)$. For the Brownian Bridge:
\begin{equation}
    \nabla_x v_t^{BB}(x) \propto -\frac{1}{1-t} I.
\end{equation}
The Lipschitz constant $L_t \approx \frac{1}{1-t}$ is not integrable over $[0, 1]$:
\begin{equation}
    \int_0^1 L_t dt = \int_0^1 \frac{1}{1-t} dt = \infty.
\end{equation}
This implies that the flow becomes infinitely sensitive to perturbations as $t \to 1$, making the learning problem ill-posed at the boundary.

\textbf{E-Bridge: Regularization via Horizon Truncation ($T_0$)}

Our proposed E-Bridge method solves the limitation of Brownian Bridge by introducing a \textbf{truncated horizon} $T_0 < 1$. The process is defined on $t \in [0, T_0]$.

\paragraph{A. Bounded Vector Field}
By restricting $t \le T_0$, the denominator in the velocity field is strictly bounded from below by $1-T_0 > 0$. The maximum norm of the velocity field is bounded:
\begin{equation}
    \sup_{t \in [0, T_0]} \| v_t^{E-Bridge}(x) \| \le C \cdot \frac{1}{1-T_0} < \infty,
\end{equation}
where $C$ depends on the bounded domain of the data. This eliminates the singularity.

\paragraph{B. Finite Lipschitz Constant}
The Lipschitz constant for the E-Bridge flow is bounded:
\begin{equation}
    L_{E-Bridge} = \sup_{t \in [0, T_0]} \left\| \nabla_x v_t^{E-Bridge}(x) \right\| \approx \frac{1}{1-T_0}.
\end{equation}
Consequently, the total accumulation of the Lipschitz constant (which relates to the error bound of the ODE solver) is finite:
\begin{equation}
    \int_0^{T_0} L_t dt \approx \int_0^{T_0} \frac{1}{1-t} dt = \ln\left(\frac{1}{1-T_0}\right) < \infty.
\end{equation}

\section{\hl{Additional Experimental Results}}

\subsection{Computational Efficiency Comparison}
Table \ref{tab:cgb_efficiency} highlights our method's exceptional efficiency: even with the largest model parameters, it incurs the lowest total inference time by requiring significantly fewer NFEs, significantly outperforming other state-of-the-art approaches.

\subsection{Real-world Image Restoration}
We provide qualitative results on real-world super-resolution. We evaluated our E-Bridge model on real-world low-quality images from \cite{Ji_2020_CVPR_Workshops}. Note that the model has never seen these specific real-world degradation distributions during training. As illustrated in Fig. \ref{fig:visual_realsr}, our E-Bridge demonstrates robust generalization capabilities. This experiment empirically validates that the dependence on paired data is not a hindrance to practicality. By leveraging robust synthetic data pipelines, the proposed E-Bridge can also achieve high-performance real-world restoration without requiring test-time optimization or degradation estimation, offering a superior trade-off between offline training cost and online inference quality.

\begin{table}[h]
  \centering
  \caption{Computational efficiency comparison on the $1024\times1024$ image denoising.}
  \vspace{-0.2cm}
  \label{tab:cgb_efficiency}
  \resizebox{0.85\textwidth}{!}{%
  \begin{tabular}{l|ccccc}
    \toprule[1.1pt]
    Method &Training Parameters &Inference Time per Step &NFE &Total Inference Time \\ 
    \midrule
    AutoDIR     &115.9 M &0.823s &25  & 20.58s  \\
    UniDB       &137.1 M &1.243s &100 & 124.30s \\
    IRBridge    &361.3 M &0.263s &100 & 26.30s \\
    UniDB++     &137.1 M &1.224s &10  & 12.24s\\
    Ours        &743.8 M &0.673s &10  & 6.73s \\
    \bottomrule[1.1pt]
  \end{tabular}}
  \vspace{-0.3cm}
\end{table}

\begin{figure}[h]
    \centering
    \includegraphics[width=\linewidth, height=0.99\textheight]{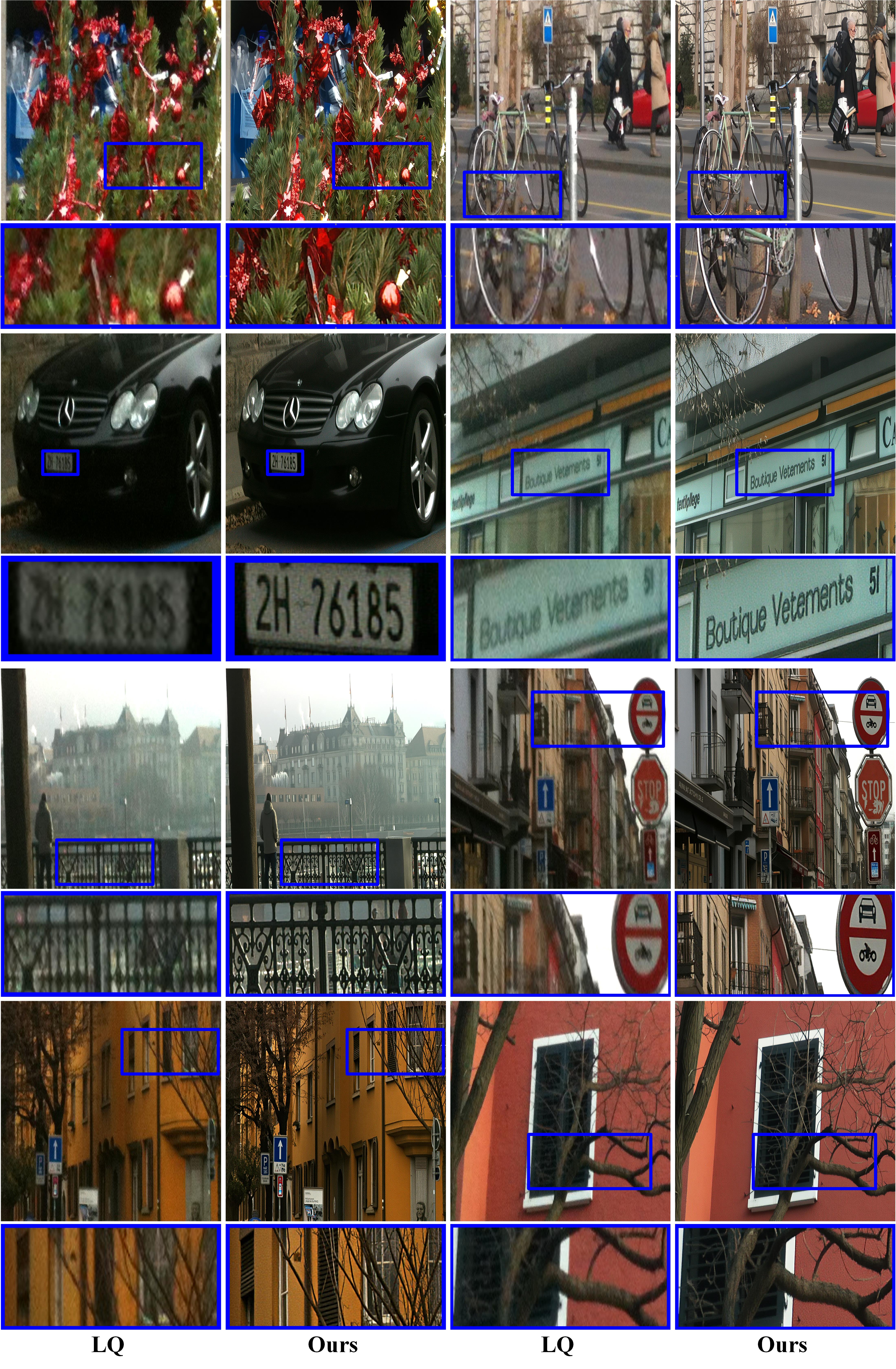} \vspace{-0.4cm}
    \caption{\hl{Additional visual results on image super-resolution in the wild.}}
    \label{fig:visual_realsr}
    \vspace{-0.7cm}
\end{figure}

\section{\hl{Limitation and Future Work}}

Although our method demonstrates superior perceptual performance and inference efficiency, several limitations remain that guide our future research directions.

\subsection{Limitation}
\vspace{-0.1cm}
\textbf{Spatial Heterogeneity Conflict:} As observed in tasks like raindrop removal, our method achieves SOTA perceptual scores but can be numerically inferior in reference-based metrics (e.g., PSNR). This is a direct consequence of using a global scalar parameter $T_0$. A uniform $T_0$ enforces a single trade-off across the entire image, making it difficult to simultaneously optimize for generative power (required to inpaint occluded regions like raindrops) and information preservation (required to maintain fidelity in clean background regions).

\textbf{Computational Overhead:} Despite significantly reducing the Number of Function Evaluations (NFE) to 1-10 steps, the reliance on the massive Flux-dev backbone necessitates substantial VRAM. This computational footprint currently hinders deployment on consumer-grade hardware or edge devices compared to other baselines.

\subsection{Future Work}
\vspace{-0.1cm}
\textbf{Spatially Adaptive $T_0$:} To resolve the heterogeneity conflict, we plan to extend the E-Bridge framework from a global scalar to a Spatially Adaptive Map $T_0(h, w)$. This will allow the model to dynamically apply high generative strength locally to degraded regions (e.g., raindrops) while enforcing high fidelity in clean areas, optimizing the trade-off pixel-by-pixel.

\textbf{Model Compression:} We plan to compress the generative capabilities of the large-scale Flux-dev backbone into a lightweight network. This process could employ trajectory alignment to minimize perceptual discrepancies between the student’s single-step predictions and the teacher’s geodesic mapping, feature-level distillation to align intermediate semantic representations, and adversarial objectives to ensure the compact model retains the sharp, photorealistic textures of the original E-Bridge without over-smoothing.

\end{document}